\newenvironment{proofsketch}{%
  \proof}{\endproof}
\begin{document}
\newcommand{\inside}[1]{\textsc{Int}(#1)}
\newcommand{\insidek}[2]{\textsc{Int}^{#1}(#2)}
\newcommand{\bincdf}[2]{U_{#1}(#2)}
\newcommand{\bincdfp}[3]{U_{#1,#2}(#3)}
\newcommand{\maxinsidef}{\hat{\Phi}}
\newcommand{\bhypercube}{\mathcal{H}}
\newcommand{\hamball}[1]{\mathcal{B}_{#1}}
\newcommand{\maxinside}{\Phi}
\newcommand{\outside}[1]{\partial #1}
\newcommand{\outsidek}[2]{\partial^{#1}#2}
\newcommand{\cil}[1]{\lceil #1 \rceil}
\newcommand{\flor}[1]{\lfloor #1 \rfloor}
\newcommand{\dham}[1]{|\outside{\hamball{#1}|}}
\newcommand{\prob}[1]{\textrm{Pr}(#1)}
\newcommand{\imspace}[2]{\mathcal{I}_{#1, #2}}
\newcommand{\imgspace}[3]{\mathcal{I}_{#1, #2, #3}}
\newcommand{\bitspace}[3]{\mathcal{B}_{#1, #2, #3}}
\newcommand{\pertspace}[3]{\hat{\mathcal{I}}_{#1, #2, #3}}
\newcommand{\appsec}{appendix}
\newcommand{\fspace}[2]{\mathcal{F}_{#1, #2}}
\newcommand{\mapperdef}{\mathcal{M}}
\newcommand{\elp}[1]{L^{#1}}
\newcommand{\normcdf}{\Phi}
\newcommand{\hamgraph}[2]{\mathcal{H}(#1, #2)}
\newcommand{\pertalg}{\textsc{FindPerturbation}}
\newcommand{\expansion}[1]{\textsc{Exp}(#1)}
\newcommand{\expk}[2]{\textsc{Exp}^{#1}(#2)}
\newcommand{\discreizedh}[2]{\mathcal{D}_{#2}(#1)}

\title{Image classifiers can not be made robust to small perturbations}
\author{Zheng Dai\inst{1}\orcidID{0000-0002-8828-1075} \and
David K. Gifford\inst{1}\orcidID{0000-0003-1709-4034}}
\authorrunning{Z. Dai et al.}
\institute{Computer Science and Artificial Intelligence Laboratory, Massachusetts Institute of Technology, Cambridge MA 02139, USA \\
\email{\{zhengdai,gifford\}@mit.edu}}
\maketitle
\begin{abstract}
The sensitivity of image classifiers to small perturbations in the input is often viewed as a defect of their construction. We demonstrate that this sensitivity is a fundamental property of classifiers. For any arbitrary classifier over the set of $n$-by-$n$ images, we show that for all but one class it is possible to change the classification of all but a tiny fraction of the images in that class with a perturbation of size $O(n^{1/\max{(p,1)}})$ when measured in any $p$-norm for $p \geq 0$. We then discuss how this phenomenon relates to human visual perception and the potential implications for the design considerations of computer vision systems.

\keywords{Computer vision  \and Human visual system \and Adversarial machine learning \and Isoperimetry}
\end{abstract}

\section{Introduction}

It has been observed that classifiers built on deep learning architectures are prone to misclassification given tiny perturbations on their inputs \citep{szegedy2013intriguing}. Because these perturbations are typically imperceptible, they are commonly thought of as adversarial \citep{papernot2016limitations}.  The existence of small perturbations that alter classifier decisions has motivated a plethora of research into how such perturbations may be engineered or prevented \citep{moosavi2016deepfool, madry2017towards, ma2018secure, tramer2020adaptive, machado2021adversarial}.
While adversarial perturbations are defined to be imperceptible to humans, the concept of imperceptibility is difficult to formally define. Therefore, the size of a perturbation is often implicitly adopted as a surrogate for perceptibility \citep{fawzi2018adversarial}.

Here we demonstrate that susceptibility to small perturbations is a fundamental property of any algorithm that partitions an image space into distinct classes. Specifically, we show that on any image space consisting of images with $n$-by-$n$ pixels and finite bit depth, there exists some universal constant $c$ (parametrized by the number of channels) such that most images in all but one class can have their classes changed with $cn$ pixel changes, a vanishingly small number compared to the $n^2$ pixels within the entire image for sufficiently large $n$. Similarly, we show that a perturbation with a $p$-norm of size $c'n^{1/p}$ suffices as well, for some $c'$ dependent on $p$, the number of channels, and the bit depth. Thus, the creation of a classifier that is robust to perturbations of the sizes described above is impossible.

Conversely, we also demonstrate that an upper bound on classifier robustness that applies universally to all image classifiers cannot be smaller than ours by more than a constant factor (parametrized by bit depth). Finally, we show how increasing the bit depth of our image space decreases classifier robustness under certain definitions of perturbation size.

Our bounds are unconditional, therefore they apply to classifiers based on human perception as well. We discuss the possible interpretations of this fact, and its potential implications for designing computer vision systems.

\subsection{Related work}

The sensitivity of neural networks to small perturbations was discovered in \citep{szegedy2013intriguing} where the authors remarked that perhaps adversarial examples form a dense low measure set analogous to the rationals. A serious effort to explain adversarial examples was undertaken in \citep{goodfellow2014explaining}, which suggests that adversarial examples is a consequence of high dimensional dot products between weights and inputs. However, their argument is not formal, and it has been shown that high dimensional dot products is neither necessary nor sufficient to explain adversarial images \citep{tanay2016boundary}.
Formal arguments bounding adversarial perturbations and robustness have been proven for specific instances \citep{gilmer2018adversarial, tsipras2018robustness}. However, the settings under which these theoretical results hold are usually highly idealized, and these arguments do not hold under more general settings.

The most general results for explaining adversarial examples comes from universal non-robustness bounds achieved through the use of isoperimetric bounds. This is the approach we take in this work. Isoperimetric results bound the surface area of any given volume in some space, so they are highly generalizable. The work presented in \citep{fawzi2018adversarial} uses an isoperimetric bound to bound the fraction of the space of natural images that is susceptible to changing classes under a small perturbation for any arbitrary classifier. However, they only consider perturbations measured by the Euclidean distance (2-norm), while our analysis encompasses perturbations measured by any $p$-norm. Furthermore, our bound is of a different nature as it considers the space of all images and is therefore unconditional and universal, while their bounds focus on image manifolds defined by generative functions and therefore are parametrized by the generator.

Isoperimetric bounds are also applied to understanding adversarial perturbations in \citep{diochnos2018adversarial}, where it is shown that for arbitrary classifiers over boolean inputs, most inputs can be pushed into a misclassification region with a small perturbation as long as the region occupies an asymptotically finite fraction of the input space. This work has since been extended to apply to a more general class of spaces in \citep{mahloujifar2019curse} using concentration bounds. Our work instead focuses on pushing images into different classification regions, rather than into a specific misclassification region, and is therefore of a slightly different nature. Also, unlike our analysis, their analysis does not preclude the existence of asymptotically infinitesimal classes of images that are robust to perturbations.

Our work also explores how these bounds apply to the human visual system due to their universality in contrast to prior work.

Studies attempting to understand adversarial perturbations in the human visual system usually do so by showing people adversarial images. This line of work has revealed that imperceptible adversarial perturbations may in fact be perceptible and influence human classifications \citep{elsayed2018adversarial, zhou2019humans}.
This line of work is very different from the work presented here: our approach is more theoretical, and our subsequent interpretations focus on perturbations that are clearly visible to humans despite being small.

In the remainder of this paper we provide a precise exposition of all our results as well as our terminology (Section 2), interpret these results (Section 3), and provide concluding remarks (Section 4). Proofs are mostly omitted and can be found in Appendix \ref{appdx-all-proofs}. 
\section{Results}

In this section we state universal non-robustness results for classifiers over images that can be encoded with finite bit strings. We then state how these non-robustness results are asymptotically the best we can achieve up to a constant factor, and we conclude by stating some results on how bit depth influences some of these bounds.

Intuitively, our results are a consequence of the high dimensional geometric phenomenon where measure concentrates near the boundary of sets in high dimensions.

\subsection{Preliminaries}

Images consist of pixels on a two dimensional grid, with each pixel consisting of a set of channels (for example R, G, and B) of varying intensity. Therefore, we define an \emph{$h$-channel image of size $n\times n$} to be a real valued tensor of shape $(n,n,h)$, where each entry is restricted to the interval $[0,1]$. The first two dimensions index the pixel, while the third indexes the channel. We use $\imgspace{n}{h}{\infty}$ to denote the set of all such images.

Only a finite subset of these images can be represented with a finite number of bits. Therefore, we define the set of all \emph{$h$-channel images of size $n\times n$ with bit depth $b$}, denoted $\bitspace{n}{h}{b}$, as the set of all bit valued tensors with shape $(n,n,h,b)$. The additional fourth dimension indexes the positions of a bit string that encodes the intensity of a channel. We map elements of $\bitspace{n}{h}{b}$ to $\imgspace{n}{h}{\infty}$ by mapping each length $b$ bit string to equally spaced values in $[0,1]$ with the largest value being 1 and the smallest being 0. We will use $\imgspace{n}{h}{b}$ to denote the image of $\bitspace{n}{h}{b}$ under this map. We will sometimes refer to $\imgspace{n}{h}{b}$ as \emph{discrete image spaces} to disambiguate them from $\imgspace{n}{h}{\infty}$, which we will refer to as the \emph{continuous image space}.

\subsubsection{Classifiers and Classes}

A classifier $\mathcal{C}$ is a function $\imgspace{n}{h}{b}\rightarrow \mathcal{Y}$, where $\mathcal{Y}$ is some finite set of labels. For each $y \in \mathcal{Y}$, we define the class of $y$ as the preimage of $y$, denoted as the set of images $\mathcal{C}^{-1}(y)$. We say that such a class is induced by $\mathcal{C}$. If a class takes up a large part of the image space, then it contains a lot of images that look like randomly sampled noise, since randomly sampling channel values from a uniform distribution yields a uniform distribution over the image space. Therefore, many images in these classes tend to be uninteresting, which motivates the following definition:

\begin{definition}
\label{def-interestingclass}
A class $C \subseteq \imgspace{n}{h}{b}$ is \emph{interesting} if it is not empty, and if it contains no more than half of the total number of images in $\imgspace{n}{h}{b}$.
\end{definition}

Note that if no class is empty, then no more than 1 class can be uninteresting. This is because classes are disjoint and so at most 1 class can contain more than half the total number of images.

\subsubsection{Perturbations and Robustness}

In order to discuss perturbations, we define addition and subtraction over tensors that are of the same shape to be element-wise, and we define the $p$-norm of a tensor $A$, denoted $\|A\|_p$, to be the $p$th root of the sum of the absolute values of the entries of $A$ raised to the $p$th power. $p$ is assumed to be a non-negative integer, and for the special case of $p=0$ we let $\|A\|_0$ be the number of non-zero entries in $A$.

We can then define what it means for an image to be robust to perturbations:

\begin{definition}
Let $\mathcal{C}:\imgspace{n}{h}{b}\rightarrow \mathcal{Y}$ be a classifier. We say an image $I \in \imgspace{n}{h}{b}$ is robust to $\elp{p}$-perturbations of size $d$ if for all $I' \in \imgspace{n}{h}{b}$, $\|I-I'\|_p \leq d$ implies $\mathcal{C}(I) = \mathcal{C}(I')$.

\end{definition}

We can then define what it means for a class to be robust to perturbations. Note that unless a class occupies the entire image space, it must contain some non-robust images, so the best we can hope for is to attain robustness for a large fraction of the images within a class. This is reflected in the following definition.

\begin{definition}
Let $\mathcal{C}:\imgspace{n}{h}{b}\rightarrow \mathcal{Y}$ be a classifier, and let $C$ be a class induced by it. Then we say that a class $C$ is $r$-robust to $\elp{p}$-perturbations of size $d$ if it is not empty, and the number of images $I \in C$ that are robust to $\elp{p}$-perturbations of size $d$ is at least $r|C|$, where $|C|$ is the number of images in $C$.

\end{definition}

\subsection{Universal upper bound on classifier robustness}

We can now state a universal non-robustness result that applies to all classifiers over discrete image spaces $\imgspace{n}{h}{b}$.

\begin{theorem}
\label{thm-main}
Let $\mathcal{C}: \imgspace{n}{h}{b} \rightarrow \mathcal{Y}$ be any classifier. Then for all real values $c > 0$, no interesting class is $2e^{-2c^2}$-robust to $\elp{p}$-perturbations of size $(2+c\sqrt{h}*n)^{1/\max(p,1)}$.

\end{theorem}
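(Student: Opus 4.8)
The plan is to regard $\imgspace{n}{h}{b}$ as the vertex set of the Hamming graph $\hamgraph{n^{2}h}{2^{b}}$: an image is a tuple of $M:=n^{2}h$ channel intensities, each drawn from the fixed set of $q:=2^{b}$ equally spaced values, and two images sit at Hamming distance $k$ exactly when they differ in $k$ of these $M$ channels. In this language an interesting class is precisely a nonempty $C$ with $|C|\le\tfrac12 q^{M}=\tfrac12|\imgspace{n}{h}{b}|$, and the statement reduces to a vertex-isoperimetric estimate for such $C$.

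First I would translate the $\elp{p}$ budget into a Hamming budget. Set $t:=\flor{2+c\sqrt{h}\,n}$. If two images differ in at most $t$ channels, their difference tensor has at most $t$ nonzero entries, each of absolute value at most $1$ (intensities lie in $[0,1]$), so its $p$-norm is at most $t^{1/\max(p,1)}\le(2+c\sqrt{h}\,n)^{1/\max(p,1)}$ for every $p\ge0$. Hence any image $I$ that is robust to $\elp{p}$-perturbations of size $(2+c\sqrt{h}\,n)^{1/\max(p,1)}$ keeps its label under every change of at most $t$ channels, i.e.\ $I\in\insidek{t}{C}$ with $C:=\mathcal{C}^{-1}(\mathcal{C}(I))$, where $\insidek{t}{C}$ denotes the $t$-interior of $C$ in $\hamgraph{M}{q}$. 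So the robust images of any class $C$ form a subset of $\insidek{t}{C}$, and it suffices to prove $|\insidek{t}{C}|<2e^{-2c^{2}}|C|$ for every $C$ with $1\le|C|\le\tfrac12 q^{M}$; since an interesting class is nonempty, this is exactly the theorem. The ``$+2$'' is there to make this inequality \emph{strict} after the floor: one checks that $t-1=1+\flor{c\sqrt{h}\,n}>c\sqrt{h}\,n=c\sqrt{M}$.

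The heart of the matter is the isoperimetric inequality itself: for every nonempty $C\subseteq\hamgraph{M}{q}$ with $|C|\le\tfrac12 q^{M}$, $|\insidek{t}{C}|\le 2e^{-2t^{2}/M}|C|$ (modulo lower-order discreteness, absorbed by the ``$+2$''); together with $t>c\sqrt{M}$ this yields $|\insidek{t}{C}|<2e^{-2c^{2}}|C|$. I would prove it in two steps. (i) A Harper-type compression/tensorization argument: among sets of a fixed cardinality in $\hamgraph{M}{q}$, the quantity $|\insidek{t}{C}|$ is dominated by its value on a Hamming ball, captured by the extremal function $\maxinside$; since exact extremizers are awkward for $q=2^{b}\ge3$ --- unlike the boolean cube, where Harper's theorem gives Hamming balls outright --- one works instead with the convex relaxation $\maxinsidef$ of $\maxinside$, which is well behaved under the Cartesian-product structure of $\hamgraph{M}{q}$ (each factor a copy of the complete graph $K_{q}$). (ii) Reduce the extremal value to a binomial ratio: when $|C|$ equals the size of a Hamming ball of radius $\rho$, $|\insidek{t}{C}|$ is at most the size of the ball of radius $\rho-t$, so the ratio has the form $\bincdf{M}{\rho-t}/\bincdf{M}{\rho}$ with $\rho$ at most the median of $\mathrm{Bin}(M,1-2^{-b})$; log-concavity of the binomial mass function forces this ratio to be largest at the median, and there a Hoeffding bound pins it at $2e^{-2t^{2}/M}$. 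I expect step (i) for $q\ge3$ to be the main obstacle --- it is where the appendix's apparatus ($\maxinside$, $\maxinsidef$, the binomial functions $\bincdf{\cdot}{\cdot}$, and the expansion operator) does its work --- whereas the perturbation-to-channel translation of the second paragraph and the final arithmetic with the floor are routine.
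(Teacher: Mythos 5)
Your proposal is correct and follows essentially the same route as the paper: identify $\imgspace{n}{h}{b}$ with the Hamming graph $\hamgraph{n^2h}{2^b}$, reduce $\elp{p}$-robustness to Hamming robustness via the fact that channel values lie in $[0,1]$, and then bound the robust interior of an interesting class by a Harper-type vertex-isoperimetric inequality combined with monotonicity of binomial tail ratios and Hoeffding's bound at the median. The only cosmetic difference is that the paper invokes Harper's theorem as a black box rather than reproving it by compression, and phrases the interior bound as a contradiction via the expansion operator rather than directly.
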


\begin{proofsketch}
We can use the images in $\imgspace{n}{h}{b}$ to form a graph where images are the vertices, and images are connected if and only if they differ at exactly one channel. In other words, the image tensors must differ at precisely one entry. Figure \ref{fig-proof1}a illustrates the construction of this graph. Note that graph distance between vertices coincides with the Hamming distance between the images represented by the vertices. Such graphs are known as Hamming graphs, and they have a vertex expansion (or isoperimetry) property ~\citep{harper1999isoperimetric} which implies that for any sufficiently small set, if we add all vertices that are within a graph distance of $\mathcal{O}(n)$ to that set, then the size of that set increases by at least some given factor (see Figure \ref{fig-proof1}b for an example).

We can then show that an interesting class $C$ cannot be too robust in the following way: suppose for contradiction that it is. Then there must be some set $C' \subseteq C$ that is pretty large, and has the property that all vertices within some graph distance of $C'$ are in $C$. We can then use the vertex expansion property to show that adding these vertices to $C'$ gives a set larger than $C$, which contradicts the assumption that all vertices within some graph distance $C'$. Plugging explicit values into this argument yields the statement of the theorem.

We can then generalize to $\elp{p}$-perturbations for arbitrary $p$ since each coordinate varies by at most 1 unit. The full proof can be found in Appendix \ref{appdx-proof-main}.\qed
\end{proofsketch}

\begin{figure}
  \begin{center}
        a)
        \includegraphics[width=0.6\textwidth]{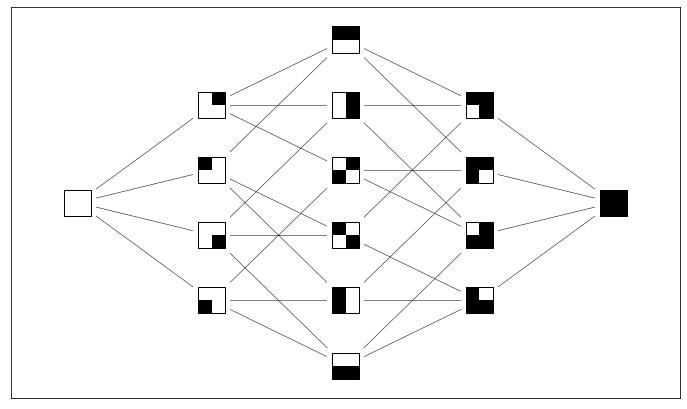}
        
        b)
        \includegraphics[width=0.6\textwidth]{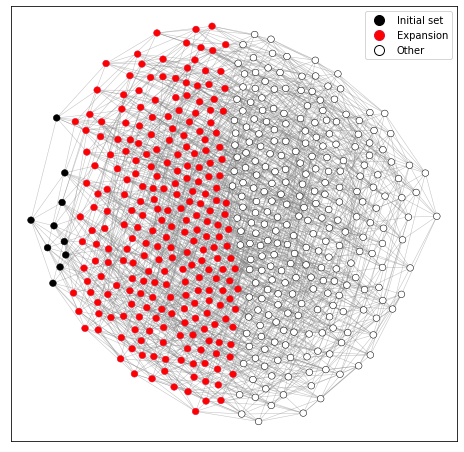}
  \end{center}
  \caption{Interpreting image spaces as Hamming graphs}
  \label{fig-proof1}
  a) We show how we construct a Hamming graph using the elements of $\imgspace{2}{1}{1}$, the space of binary images on four pixels. By construction, graph distance coincides exactly with Hamming distance.
  
  b) We demonstrate the expansion property of Hamming graphs on a Hamming graph constructed using $\imgspace{3}{1}{1}$ as the vertex set. If we pick some initial set of vertices (in black), then the set of vertices that are a graph distance of at most 3 (3 being $n$ in this case) from that initial set (in black and red) is much larger than that initial set. The nature of ``much larger'' is expanded on in Appendix \ref{appdx-proof-main}.
\end{figure}

Intuitively, the above results state that we can change the class of most ``interesting'' images with small perturbations that are on the order of $\mathcal{O}(n)$ pixel changes. The implications of this are considered in the discussion.

\subsubsection{The universal non-robustness results are asymptotically optimal up to a constant factor}

Up to a constant factor, the bounds in Theorem \ref{thm-main} are the best possible for a universal non-robustness result that applies to arbitrary predictors if we only consider $n$ and hold the number of channels per pixel $h$ and bit depth $b$ constant. In other words, there exists no bound on robustness that applies \emph{universally to all classifiers} that grows much more slowly in $n$ than the ones given in Theorem \ref{thm-main}. Therefore, if we wish to show that the classes induced by some classifier are not robust to, for instance, $\elp{0}$-perturbations of size $\mathcal{O}(log(n))$, more specific properties of that classifier would need to be considered.

To prove this, consider the classifier defined by Algorithm \ref{alg-allcord}.

\begin{algorithm}
\SetKwInOut{Input}{Input}
\Input{An image $I \in \imgspace{n}{h}{b}$}
\KwResult{A label belonging to $\{0,1\}$}
$S \gets 0$\;
\For{$x \gets 1$ \KwTo $n$}{
\For{$y \gets 1$ \KwTo $n$}{
\For{$a \gets 1$ \KwTo $h$}{
    $S \gets S+I_{x,y,a}$\;
}
}
}
\uIf{
$S < n^2h/2$
}{
\Return 0\;
}
\uElse{
\Return 1\;
}
\caption{Robust Classifier}
\label{alg-allcord}
\end{algorithm}

\begin{theorem}
\label{thm-robust}
Let $\mathcal{C}: \imgspace{n}{h}{b} \rightarrow \{0,1\}$ be the classifier described by Algorithm \ref{alg-allcord}. Then there exists an interesting class $C$ induced by $\mathcal{C}$ such that for all $c > 0$:

\begin{enumerate}
    \item $C$ is $(1-4c)$-robust to $\elp{p}$-perturbations of size $c\sqrt{h}*n - 2$ for all $p \leq 1$.
    \item $C$ is $(1-4c)$-robust to $\elp{p}$-perturbations of size $\frac{(c\sqrt{h}*n - 2)^{1/p}}{2^b-1}$ for all $p \geq 2$.
\end{enumerate}
\end{theorem}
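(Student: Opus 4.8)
The plan is to take for $C$ the class $\mathcal{C}^{-1}(0) = \{I \in \imgspace{n}{h}{b} : S(I) < n^2h/2\}$, where $S(I) = \sum_{x,y,a} I_{x,y,a}$ is the total intensity computed by Algorithm~\ref{alg-allcord}. First I would verify $C$ is interesting: it contains the all-zero image, and the involution sending each channel value $v$ to $1-v$ is a bijection of $\imgspace{n}{h}{b}$ that carries $S$ to $n^2h - S$, hence injects $\mathcal{C}^{-1}(0)$ into $\mathcal{C}^{-1}(1)$; so $|C| \le |\mathcal{C}^{-1}(1)|$ and therefore $|C| \le \tfrac12|\imgspace{n}{h}{b}|$.

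The next step is to reduce both claims to one counting statement. Set $d = c\sqrt h\,n - 2$; we may assume $d > 0$ (otherwise robustness to $\elp{p}$-perturbations of size $d \le 0$ is vacuous) and $c < 1/4$ (otherwise $1-4c \le 0$). Since $|S(I)-S(I')| \le \sum_{x,y,a}|I_{x,y,a}-I'_{x,y,a}| = \|I-I'\|_1$, an image $I$ with $S(I) < n^2h/2 - d$ stays in class $0$ under any perturbation of $\elp{1}$-size at most $d$. For $p \le 1$ I would note that the $\ell_q$-norms are nonincreasing in $q$, so $\|I-I'\|_1 \le \|I-I'\|_p$ (and for $p=0$, each nonzero coordinate of $I-I'$ contributes at most $1$ to $\|I-I'\|_1$), hence an $\elp{p}$-perturbation of size at most $d$ is an $\elp{1}$-perturbation of size at most $d$. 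For $p \ge 2$ I would use that every coordinate of $I-I'$ is an integer multiple $k_{x,y,a}/m$ of $1/m$ with $m = 2^b-1$; from $\|I-I'\|_p \le d^{1/p}/m$ we get $\sum_{x,y,a} k_{x,y,a}^p \le d$, and since $k \le k^p$ for integers $k \ge 0$ and $p \ge 1$, $\sum_{x,y,a} k_{x,y,a} \le d$, i.e.\ $\|I-I'\|_1 \le d/m \le d$. In either case $\{I : S(I) < n^2h/2 - d\}$ consists of images of $C$ robust to the stated perturbations, so it suffices to show
\[
\#\{I : S(I) < n^2h/2 - d\} \ \ge\ (1-4c)\,\#\{I : S(I) < n^2h/2\}.
\]

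For the counting I would exploit the symmetry of $S$ about $n^2h/2$ twice: as above $\#\{S < n^2h/2\} \le \tfrac12|\imgspace{n}{h}{b}|$, while $\#\{S < n^2h/2 - d\} = \#\{S > n^2h/2 + d\} = \tfrac12(|\imgspace{n}{h}{b}| - \#\{\,|S - n^2h/2| \le d\,\})$. Dividing, the ratio above is at least $1 - \Pr(|S - n^2h/2| \le d)$ for $I$ uniform on $\imgspace{n}{h}{b}$, so it remains to prove $\Pr(|S - n^2h/2| \le d) \le 4c$. Under the uniform distribution $S$ is a sum of $N = n^2h$ i.i.d.\ coordinates, each uniform on $\{0, 1/m, \dots, 1\}$ and thus of range $1$ and variance at least $1/12$; hence $\mathrm{sd}(S) \ge n\sqrt h/(2\sqrt3)$, and an anti-concentration estimate bounds the probability that $S$ lies within $d \le c\sqrt h\,n$ of its mean by $\tfrac{2d}{\sqrt{2\pi}\,\mathrm{sd}(S)} + o(1) \le \tfrac{2\sqrt3}{\sqrt{2\pi}}\,c + o(1) < 4c$ for $d$ in the non-vacuous range.

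The main obstacle is that last anti-concentration bound. Hoeffding-type concentration is useless here because it bounds the tails of $S$ from above (equivalently, the bulk from below), whereas we must bound the bulk near the mean from above. This needs control of the largest atom $\max_j \Pr(mS = j)$ of the sum, which is $O(1/\mathrm{sd}(S))$ by a local central limit theorem (or a Fourier estimate using that each coordinate is spread over $2^b$ equally likely values); making the absolute constant explicit enough that an interval of radius $\approx c\sqrt h\,n$ around the mean carries mass at most $4c$, and handling the few remaining small values of $n$ (where either $d \le 0$ and the statement is vacuous, or a direct count works), is the only genuinely delicate part. The remaining ingredients — the choice of $C$, the norm inequalities, and the symmetrization — are routine.
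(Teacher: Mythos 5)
Your overall architecture coincides with the paper's: you pick the same class $Z=\mathcal{C}^{-1}(0)$, argue interestingness by the symmetry $v\mapsto 1-v$, reduce the $p=0$ and $p\geq 2$ cases to the $\ell^1$ case exactly as in Lemmas \ref{lemma-robust-l2} and the subsequent lemma (your integrality argument $\sum k \leq \sum k^p \leq d$ is a clean equivalent of the paper's passage through $\lceil\cdot\rceil$), and your symmetrization reduces everything to showing $\prob{|S-n^2h/2|\leq d}\leq 4c$, which is the same inequality the paper extracts. The gap is that the one non-routine step --- the anti-concentration bound --- is precisely the step you leave unproven, and the route you sketch for it does not close as stated. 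You invoke a local central limit theorem to get $\max_j \prob{mS=j} = O(1/\mathrm{sd}(mS))$ and write the resulting bound as $\tfrac{2\sqrt{3}}{\sqrt{2\pi}}c + o(1) < 4c$. But the theorem is claimed for all $n$ and all $c>0$ simultaneously, and the available margin $4c - 1.38c \approx 2.6c$ shrinks with $c$; since the non-vacuous range of $c$ extends down to $c \approx 2/(\sqrt{h}n)$, the error term must be bounded by $O(1/\sqrt{n^2h})$ with a small explicit constant, uniformly in $n$ and in the lattice span $2^{-b}$. The asymptotic local CLT does not give this, and your fallback of ``handling the few remaining small values of $n$ by direct count'' does not work because the exceptional set of pairs $(n,c)$ is not finite: for every $n$ there is a window of small $c$ where the statement is non-vacuous and the asymptotic estimate has not yet kicked in. So what you need is a fully non-asymptotic, explicit-constant local limit bound, which is not an off-the-shelf citation.

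The paper avoids this entirely with an elementary device you are missing: each coordinate, uniform on $2^b=2k$ equally spaced values, is decomposed as $k Y_i + Z_i$ with $Y_i$ Bernoulli$(1/2)$ and $Z_i$ uniform on $k$ values, all independent, so that $S$ becomes (an affine image of) $B+D$ with $B$ binomial and $D$ symmetric about its mean and independent of $B$. Lemma \ref{lemma-binomial-spread} shows that convolving with the symmetric $D$ can only help the lower-tail bound at a half-integer threshold, so one only needs to control the mass of the \emph{binomial} $B$ on an interval of $O(d)$ integer points, and the mode of $B$ is bounded by $1/\sqrt{n^2h}$ via Stirling (Lemma \ref{lemma-binomial-mode}). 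This yields the explicit, non-asymptotic estimate of Lemma \ref{lemma-anti-concentration} with no dependence on $b$ and no $o(1)$. Note that naively bounding $\max_j\prob{mS=j}$ by the binomial mode via convolution is not enough, since the interval $|S-n^2h/2|\leq d$ contains $\Theta(dm)$ atoms of $S$; the point of Lemma \ref{lemma-binomial-spread} is to collapse the problem onto the coarse binomial lattice first. You should either adopt this decomposition or supply a genuinely quantitative concentration-function bound; as written, the heart of the proof is missing.
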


\begin{proofsketch}
Given an image $I$, let $S(I)$ be the sum of all its channel values subtracted by $n^2h/2$. Then $I$ being robust to $\elp{1}$-perturbations of size $x$ is approximately equivalent to $S(I) \notin [-x, x]$. By the central limit theorem, the fraction of images $I$ such that $S(I) \notin [-cn\sqrt{h}, cn\sqrt{h}$] is some monotonic function of $c$ independent of $n$ and $h$ if $n^2h$ is sufficiently large, which is our desired result. Appendix \ref{appdx-proof-robust} provides a more careful analysis of this that does not rely on limiting behaviour and extends the result to all $p$-norms.\qed
\end{proofsketch}

Combining this statement with Theorem \ref{thm-main} then immediately yields the following statement, which implies that the statements in Theorem \ref{thm-main} are asymptotically optimal up to a constant factor:

\begin{corollary}
\label{cor-main}
For all integers $h,b \geq 1$, $p \geq 0$, and $r \in (0,1)$, there exist constants $c_1 \geq c_2 > 0$ and $n_0$ such that for any $n\geq n_0$ and labels $\mathcal{Y}$:

\begin{enumerate}
    \item No classifier $\mathcal{C}:\imgspace{n}{h}{b}\rightarrow \mathcal{Y}$ induces an interesting class that is $r$-robust to $\elp{p}$-perturbations of size $c_1n^{1/\max(p,1)}$.
    \item There exists a classifier $\mathcal{C}:\imgspace{n}{h}{b}\rightarrow \mathcal{Y}$ which induces an interesting class that is $r$-robust to $\elp{p}$-perturbations of size $c_2n^{1/\max(p,1)}$.
\end{enumerate}
\end{corollary}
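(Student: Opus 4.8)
The plan is to read both items off Theorems~\ref{thm-main} and~\ref{thm-robust}, instantiating the free parameter $c$ in each to hit the prescribed robustness level $r$, and then absorbing the additive ``$\pm 2$'' terms, the factor $2^b-1$, and the exponent $1/\max(p,1)$ into $c_1,c_2$ at the cost of a threshold $n_0$. Write $m = \max(p,1) \ge 1$. I will repeatedly use the elementary monotonicity fact that if a class is $r$-robust to $\elp{p}$-perturbations of size $d'$, then it is $r$-robust to $\elp{p}$-perturbations of every size $d \le d'$ (the set of images robust to size-$d'$ perturbations is contained in the set robust to size-$d$ perturbations).

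For item~1, choose $c^\star = \sqrt{\tfrac12\ln(2/r)}$, which is positive and satisfies $2e^{-2(c^\star)^2} = r$ since $r\in(0,1)$. Theorem~\ref{thm-main} then says that for every $\mathcal{C}\colon\imgspace{n}{h}{b}\to\mathcal{Y}$ no interesting class is $r$-robust to $\elp{p}$-perturbations of size $(2+c^\star\sqrt{h}\,n)^{1/m}$; by monotonicity the same holds for every larger size, so it is enough to pick $c_1$ with $c_1 n^{1/m}\ge(2+c^\star\sqrt{h}\,n)^{1/m}$ for all $n\ge n_0$, i.e.\ $c_1^m\ge c^\star\sqrt{h}+2/n$. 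Taking $n_0\ge 2$ and $c_1=(c^\star\sqrt{h}+1)^{1/m}$ works, and this $c_1$ depends only on $h,p,r$.

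For item~2, note that since $p$ is a non-negative integer we have either $p\le 1$ (so $m=1$) or $p\ge 2$ (so $m=p$), with no gap. Set $c_\circ=(1-r)/4>0$ so that $1-4c_\circ=r$, let $C$ be the interesting class that Theorem~\ref{thm-robust} produces for the classifier of Algorithm~\ref{alg-allcord}, and enlarge $n_0$ so that $c_\circ\sqrt{h}\,n-2\ge\tfrac12 c_\circ\sqrt{h}\,n$ for all $n\ge n_0$ (i.e.\ $n_0\ge 4/(c_\circ\sqrt{h})$). When $p\le 1$, Theorem~\ref{thm-robust} makes $C$ $r$-robust to size $c_\circ\sqrt{h}\,n-2\ge\tfrac12 c_\circ\sqrt{h}\,n$, so by monotonicity $c_2=\tfrac12 c_\circ\sqrt{h}$ works; when $p\ge 2$ it makes $C$ $r$-robust to size $(c_\circ\sqrt{h}\,n-2)^{1/p}/(2^b-1)\ge(\tfrac12 c_\circ\sqrt{h}\,n)^{1/p}/(2^b-1)$, so $c_2=(\tfrac12 c_\circ\sqrt{h})^{1/p}/(2^b-1)$ works; either way $c_2$ depends only on $h,b,p,r$. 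Finally take $n_0$ to be the larger of the two thresholds and replace $c_1$ by $\max(c_1,c_2)$ — harmless for item~1 by monotonicity — to guarantee $c_1\ge c_2>0$. (We read the corollary with $|\mathcal{Y}|\ge 2$, so that an interesting class can exist at all and Algorithm~\ref{alg-allcord}'s binary classifier can be realized inside $\mathcal{Y}$ by relabelling.)

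Essentially nothing beyond Theorems~\ref{thm-main} and~\ref{thm-robust} is required, so I expect no real obstacle; the point to watch is the \emph{direction} of the monotonicity reduction — it forces $c_1$ \emph{upward} in item~1 but $c_2$ \emph{downward} in item~2 — together with confirming that the resulting constants and $n_0$ depend only on $h,b,p,r$ and not on $n$ or $\mathcal{Y}$.
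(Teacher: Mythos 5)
Your proposal is correct and is exactly the argument the paper intends: the paper offers no separate proof of Corollary~\ref{cor-main}, stating only that it follows ``immediately'' by combining Theorems~\ref{thm-main} and~\ref{thm-robust}, and your instantiation of $c$ to hit the target robustness $r$, the monotonicity-in-$d$ reductions (upward for item~1, downward for item~2), and the absorption of the additive terms into $n_0$ are precisely the omitted bookkeeping. The observations that the integrality of $p$ leaves no gap between the $p\le 1$ and $p\ge 2$ cases and that $|\mathcal{Y}|\ge 2$ is needed for item~2 are sensible and do not change the route.
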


We remark that the constant factor by which Theorem \ref{thm-main} misses optimality by is dependent on the bit depth $b$ for $p$-norms where $p \geq 2$, so significant improvements in the bound may still be possible when we consider it. We make some progress towards this in Theorem \ref{thm-cont-discr}.

\subsubsection{Classifier robustness to \texorpdfstring{$\elp{2}$}{Lg}-perturbations decreases with increasing bit depth}

In this section we investigate the role played by the bit depth $b$. Theorem \ref{thm-robust} has a dependency on $b$ when considering $\elp{p}$-perturbations for $p \geq 2$. Is there an alternative construction by which we can remove any such dependency altogether to close the gap between Theorem \ref{thm-main} and \ref{thm-robust}? We demonstrate in this section that we cannot.

Specifically, we can derive a universal upper bound on robustness that is dependent on $b$, such that as $b$ grows without bound, this bound approaches some constant independent of the number of pixels in the image.

\begin{theorem}
\label{thm-cont-discr}
Let $\mathcal{C}: \imgspace{n}{h}{b} \rightarrow \mathcal{Y}$ be any classifier. Then for all real values $c > 0$ and $p \geq 2$, no interesting class is  $2e^{-c^2/2}$-robust to $\elp{p}$-perturbations of size $\big( c + 2\frac{n\sqrt{h}}{2^{b}} \big)^{2/p}$.
\end{theorem}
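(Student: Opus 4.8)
The plan is to reduce the statement to the case $p=2$ by an elementary pointwise comparison of norms, and then to settle the $p=2$ case by a dimension‑free isoperimetric inequality on the continuous cube $\imgspace{n}{h}{\infty}=[0,1]^{N}$ with $N=n^{2}h$. For the reduction, note that for any tensor $v$ whose entries lie in $[-1,1]$ and any integer $p\ge 2$ we have $\|v\|_p^{p}=\sum_i|v_i|^{p}\le\sum_i|v_i|^{2}=\|v\|_2^{2}$, hence $\|v\|_p\le\|v\|_2^{2/p}$. Applying this to $v=I-I'$ for $I,I'\in\imgspace{n}{h}{b}$ shows that $\|I-I'\|_2\le\rho$ forces $\|I-I'\|_p\le\rho^{2/p}$, so any image robust to $\elp{p}$-perturbations of size $\rho^{2/p}$ is robust to $\elp{2}$-perturbations of size $\rho$, and the same passes to classes. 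Taking $\rho=c+2\tfrac{n\sqrt h}{2^{b}}$, it suffices to prove that no interesting class is $2e^{-c^{2}/2}$-robust to $\elp{2}$-perturbations of size $c+2\tfrac{n\sqrt h}{2^{b}}$.

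For the $p=2$ case I would partition $[0,1]^{N}$ (up to a null set) into $2^{bN}$ axis‑aligned boxes of side $2^{-b}$, assigning to each $I\in\imgspace{n}{h}{b}$ the box $\mathrm{cell}(I)$ whose $i$-th factor is $[\,\tfrac{j_i}{2^{b}},\tfrac{j_i+1}{2^{b}}\,)$ when the $i$-th coordinate of $I$ equals $\tfrac{j_i}{2^{b}-1}$. Then $I$ lies within Euclidean distance $\sqrt N/2^{b}=n\sqrt h/2^{b}$ of every point of $\mathrm{cell}(I)$, each cell has Lebesgue measure $2^{-bN}$, so for a class $C$ the set $\widetilde C:=\bigcup_{I\in C}\mathrm{cell}(I)$ has $\mu(\widetilde C)=|C|/|\imgspace{n}{h}{b}|$, and $C$ is interesting exactly when $\mu(\widetilde C)\le 1/2$. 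Suppose for contradiction that an interesting $C$ is $2e^{-c^{2}/2}$-robust to $\elp{2}$-perturbations of size $d:=c+2\tfrac{n\sqrt h}{2^{b}}$, let $C'\subseteq C$ be its robust images ($|C'|\ge 2e^{-c^{2}/2}|C|$), and $\widetilde{C'}:=\bigcup_{I\in C'}\mathrm{cell}(I)$. If $x\in\mathrm{cell}(I)$ with $I\in C'$ and $\|y-x\|_2\le c$, then writing $I'$ for the image with $y\in\mathrm{cell}(I')$, the triangle inequality gives $\|I-I'\|_2\le\tfrac{n\sqrt h}{2^{b}}+c+\tfrac{n\sqrt h}{2^{b}}=d$, so robustness of $I$ places $I'$ in $C$, i.e. $y\in\widetilde C$. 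Hence the Euclidean $c$-neighborhood $(\widetilde{C'})_c$ is contained in $\widetilde C$.

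Now I would invoke a dimension‑free Gaussian‑type isoperimetric inequality for the uniform measure on $[0,1]^{N}$ in enlargement form: for every Borel $A$ and $t\ge 0$, $\mu(A_t)\ge\Phi\!\big(\Phi^{-1}(\mu(A))+t\big)$, where $\Phi$ is the standard normal CDF (this holds because the uniform measure on $[0,1]$ has isoperimetric profile at least that of the standard Gaussian, a property that tensorizes). With $A=\widetilde{C'}$, $t=c$, and $(\widetilde{C'})_c\subseteq\widetilde C$, this gives $\Phi\!\big(\Phi^{-1}(\mu(\widetilde{C'}))+c\big)\le\mu(\widetilde C)$, hence $\Phi^{-1}(\mu(\widetilde{C'}))\le\Phi^{-1}(\mu(\widetilde C))-c$, and therefore
\[
\frac{|C'|}{|C|}=\frac{\mu(\widetilde{C'})}{\mu(\widetilde C)}\le\frac{\Phi\!\big(\Phi^{-1}(\mu(\widetilde C))-c\big)}{\Phi\!\big(\Phi^{-1}(\mu(\widetilde C))\big)}.
\]
A one‑variable check (using the log‑concavity of $\Phi$, i.e. the monotonicity of the hazard rate $\phi/\Phi$) shows the right‑hand side is nondecreasing in $\mu(\widetilde C)$, so over the interesting range $\mu(\widetilde C)\in(0,1/2]$ it is at most its value at $\mu(\widetilde C)=1/2$, namely $2\Phi(-c)\le e^{-c^{2}/2}<2e^{-c^{2}/2}$, contradicting $|C'|\ge 2e^{-c^{2}/2}|C|$.

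The main obstacle is the last step. A bare concentration inequality would only yield $\mu(\widetilde{C'})\lesssim e^{-c^{2}/2}$ in absolute terms, which says nothing about the robust \emph{fraction} $\mu(\widetilde{C'})/\mu(\widetilde C)$ when $\widetilde C$ has small measure; one genuinely needs the isoperimetric inequality in profile/enlargement form, together with the monotonicity of the displayed ratio, to handle arbitrarily small interesting classes in a uniform way. The only other thing to watch is the discretization bookkeeping — it is the appearance of the mesh $2^{-b}$ in the triangle‑inequality step that produces the additive term $2\tfrac{n\sqrt h}{2^{b}}$ — and the factor $2$ in $2e^{-c^{2}/2}$ leaves enough room to absorb any loss of constants in the isoperimetric inequality actually invoked.
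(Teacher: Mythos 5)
Your proposal is correct and follows essentially the same route as the paper: the same reduction from $p\geq 2$ to $p=2$ via $\|v\|_p^p\leq\|v\|_2^2$, the same discretization into side-$2^{-b}$ boxes producing the additive $2n\sqrt{h}/2^{b}$ term by the triangle inequality, and the same combination of Gaussian isoperimetry on the cube with the monotonicity of $\Phi(x-c)/\Phi(x)$. The only difference is presentational — you phrase the middle step as a deterministic containment $(\widetilde{C'})_c\subseteq\widetilde{C}$ rather than the paper's random-jump procedure, and you invoke the enlargement form $\mu(A_t)\geq\Phi(\Phi^{-1}(\mu(A))+t)$ as known, which is precisely what the paper derives from the $\liminf$ boundary-measure form in Lemma \ref{lemma-cont-tailbound}.
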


\begin{proofsketch}
We will focus on the 2-norm. Extension to higher $p$-norms is straightforward and is given as part of the full proof found in Appendix \ref{appdx-cont-discr-proof}.
The main idea of the proof rests on the fact that if we extend the classifier to the continuous image space with something like a nearest neighbour approach, the measure of the images that are robust to perturbations of a constant size is small (the statement and proof may be found in Appendix \ref{appdx-continuous-results}). Therefore, if we randomly jump from an image in the discrete image space to an image in the continuous image space, with high probability we will be within a constant distance of an image of a different class. The size of this random jump can be controlled with a factor that shrinks with increasing bit depth. Summing up the budget required for this jump, the perturbation required on the continuous image space, and the jump back to the discrete image space yields the desired bound.\qed
\end{proofsketch}

We remark that this suggests that the bounds in Theorem \ref{thm-main} pertaining to $\elp{p}$-perturbations for $p \geq 2$ can be improved to reflect its dependency on the bit depth $b$. However, additional work would need to be done to show that the component that shrinks with $b$ scales with $n^{1/p}$ rather than $n^{2/p}$.

\subsection{Summary of bounds and their relation to average image distances}

We conclude this section by recapitulating the bounds we derived and compare them to the average distances between images for context.

We summarize the bounds we derived in Table \ref{table-summary}. For parsimony, we have reparametrized the bounds in terms of the robustness $r$ in Table \ref{table-summary}, although the equations look more complex as a result. In terms of image size $n$, the bounds stated for the $0$-norm and $1$-norm are asymptotically optimal up to a constant factor. The bounds for the other $p$-norms are also asymptotically optimal up to a constant factor, although the constant is parametrized by the bit depth $b$. We showed that the presence of $b$ in our lower bound is not an artifact of our construction: robustness really does drop as $b$ increases (Theorem \ref{thm-cont-discr}).

\begin{table*}[t]
\caption{Bounds for attainable robustness. Rather than leaving the robustness and bound parametrized by a separate constant $c$, the bounds have been reparametrized in terms of the robustness $r$. The upper bound should be understood as ``no classifier induces an interesting class that is $r$-robust to perturbations of these sizes'' and the lower bound should be interpreted as ``there exists a classifier that induces an interesting class that is $r$-robust to perturbations of these sizes''.}
\label{table-summary}
\vskip 0.15in
\begin{center}
\begin{small}
\begin{sc}
\begin{tabular}{lll}
\toprule
Perturbation & Upper bound & Lower bound \\
\midrule
\shortstack{$\elp{0}$-perturbation\\
$\elp{1}$-perturbation}

& $2 + \sqrt{\dfrac{h}{2}ln(\dfrac{2}{r})} * n$
& $-2 + \big( \dfrac{1-r}{4} \big) \sqrt{h}*n$
\\ %
\\ %
\shortstack{$\elp{p}$-perturbation,\\ $p \geq 2$}
& \shortstack{$\min\bigg(
\big(
2 + \sqrt{\dfrac{h}{2}ln(\dfrac{2}{r})} * n
\big) ^ {1/p}
,%
\big(
\sqrt{2 ln( \dfrac{2}{r} )} + \dfrac{2\sqrt{h}}{2^{b}}*n
\big) ^ {2/p}
\bigg)$
}
& $\dfrac{\bigg(
-2 + \big( \dfrac{1-r}{4} \big) \sqrt{h}*n
\bigg)^{1/p}
}
{2^b - 1}$
\\
\bottomrule
\end{tabular}
\end{sc}
\end{small}
\end{center}
\vskip -0.1in
\end{table*}

To conclude this section, we contextualize the bounds derived in this section by comparing them to typical distances between random elements of the image space. We can show that for a pair of images $I, I' \in \imgspace{n}{h}{b}$ that are sampled independently and uniformly, we have:

\begin{align}
    \mathbb{E}[\|I-I'\|_p] \geq k_{h,b,p}n^{2/\max(1,p)}
\end{align}

Where $k_{h,b,p}$ is some constant parametrized by $h$, $b$, and $p$. See Appendix \ref{appdx-avg-distance} for additional details.

Combining this with Corollary \ref{cor-main} shows that if $n$ is sufficiently large, for 99\% (or some arbitrarily high percentage) of images $I''$ within some interesting class, there exists some $c_{h,b,p}$ parametrized by $h$, $b$, and $p$ such that:

\begin{align}
\dfrac{\min_{X \in \imgspace{n}{h}{b}, \mathcal{C}(I'') \neq \mathcal{C}(X)} \|I''-X\|_p }{ \mathbb{E}[\|I-I'\|_p] }
\leq
    c_{h,b,p}n^{-\frac{1}{\max(p,1)}}
\end{align}

The right hand side approaches $0$ as $n$ grows without bound, so compared to typical distances one finds in an image space, the distance of an image to an image outside of its class is vanishingly small in any $p$-norm it is measured in.
\section{Human classification decisions are subject to universal bounds on robustness}

Since the bounds in Table \ref{table-summary} apply universally to any image classifier, they must also apply to the human visual system. Although there are many nuances to consider when interpreting the human visual system as a classifier, we can abstract most of them out by considering the following system for classifying images: we imagine a room containing a person and a monitor that displays images of size $n$-by-$n$. The person then has access to a selection of labels to label images with. To classify an image, the image is first fed into a memoization subroutine that checks if the image has been seen before and returns the label it was previously labelled with if it has. If the image has not been seen before, it is then displayed on the monitor, and the person is allowed to select a single label (or no label at all) to apply to the image. We remark that this classifier can be \emph{concretely realized}, so we cannot dismiss it as simply an abstract construction.

This system acts as a classifier which partitions the set of all images into disjoint classes, therefore the bounds in Table \ref{table-summary} must apply. To simplify the discussion, we make an assumption about the human based classifier: at least half the images in the image space are unlabelled. This condition is met if there is no label applicable to images that look like random static. Intuitively, we can interpret labelled images as ones that are ``meaningful'' and unlabelled images as ones that are ``meaningless'' if the label set is sufficiently large.

If the unlabelled images occupy at least half the image space, then the labelled images form an interesting class (as defined by Definition \ref{def-interestingclass}). Therefore, the bounds in Table \ref{table-summary} apply, which means that a large fraction of labelled images can be turned into unlabelled images with a small perturbation.

If we return to the intuition that labels formalize the notion of ``meaning'', this means that for most ``meaningful'' images, the meaning can be erased with only a tiny fluctuation. Conversely, the ``meaning'' present in most ``meaningful'' images arises from tiny fluctuations.

The bounds in Table \ref{table-summary} then state that such ``meaning'' can fit in a perturbation of size $\mathcal{O}(n)$ when measured using the 1-norm or via the Hamming distance. This can be interpreted as a statement about the saliency of line drawings. Figure \ref{fig-crossWrite} gives a demonstration of how line drawings are small perturbations that contain ``meaning''.

\begin{figure}
  \begin{center}
        a) \includegraphics[width=0.25\textwidth]{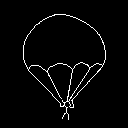}
        b) \includegraphics[width=0.25\textwidth]{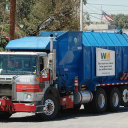}
        c) \includegraphics[width=0.25\textwidth]{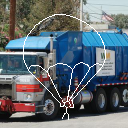}
        
        d) \includegraphics[width=0.25\textwidth]{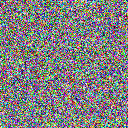}
        e) \includegraphics[width=0.25\textwidth]{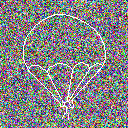}
  \end{center}
  \caption{Small perturbations form meaningful patterns}
  \label{fig-crossWrite}
  We show how a small perturbation (a) when overlaid on either a (b) natural image (sourced from~\citep{imagenette}) or a (d) uniformly randomly drawn image is able to add meaningful information, in this case a parachute, to those images (c, e).
\end{figure}

Table \ref{table-summary} also states that when we raise the bit depth of the image space to be arbitrarily high, ``meaning'' can fit in a perturbation of size $\mathcal{O}(1)$ when measured using a $p$-norm with $p \geq 2$. Line drawings do not necessarily fulfill this criterion, so the interpretation of this fact is more difficult. The human visual system is known to be particularly sensitive to certain small cues \citep{liu2014seeing}, but a unified understanding remains elusive.

Understanding the nature of the small perturbations that humans are sensitive to is not merely of academic curiosity.
The results summarized in Table \ref{table-summary} show that no computer vision system can be robust to small perturbations. However, a computer vision system that is aligned to the human visual system \emph{ought} not be robust to small perturbations, since the human visual system is not robust either. Over the past decade we have learned that standard machine learning methodology does not automatically produce vision systems that are aligned to the human visual system with respect to small perturbations ~\citep{szegedy2013intriguing}, and methodologies that seek to produce such vision systems still contain misalignments ~\citep{tramer2020adaptive}. A deeper understanding of how small perturbations affect the human visual system may inform the development of such methodologies (for example we may wish to explicitly train computer vision systems on human sensitive small perturbations), which is becoming increasingly necessary as computer vision systems become increasingly deployed in safety and security critical applications, where the trustworthiness of the system is essential~\citep{pereira2020challenges, ma2018secure}.

\section{Conclusion}

We have derived universal non-robustness bounds that apply to any arbitrary image classifier. We have further demonstrated that up to a constant factor, these are the best bounds attainable. These bounds reveal that most images in any interesting class can have their class changed with a perturbation that is asymptotically infinitesimal when compared to the average distance between images.

We then discuss how these universal properties of classifiers relate to the human visual system. We show that part of our results can be interpreted as the sensitivity of the human visual system to line drawings, which are tiny signals when measured using the 1-norm or 0-norm. However, line drawings can still be ``large'' when measured using the 2-norm, so a full understanding remains the subject of future work.

Our results focuses on image \emph{classifiers}, which make hard decisions when labelling images. However, vision models underlying the classifiers can make soft decisions, which are then further processed into hard decisions. The applicability of our results to such underlying vision models will be the subject of future investigation.
\bibliographystyle{plainnat}
\bibliography{references.bib}
\newpage
\appendix
\section{Proofs of Statements}
\label{appdx-all-proofs}
\subsection{Proof of Theorem \ref{thm-main}}
\label{appdx-proof-main}

\subsubsection{Properties of binomial coefficients}
We will work with binomial coefficients extensively. To simplify some of our statements, we will extend the definition of a binomial coefficient to work with any $n > 0$ and arbitrary integer $k$:

\begin{align}
    \binom{n}{k} = \begin{cases}
\dfrac{n!}{k!(n-k)!} & \text{if $0 \leq k \leq n$}\\
0 & \text{otherwise}
\end{cases}
\end{align}

Binomial coefficients can be bound in the following way:

\begin{lemma}
$\binom{n}{k} < \dfrac{2^n}{\sqrt{n}}$ when $n \geq 1$.
\label{lemma-binomial-mode}
\end{lemma}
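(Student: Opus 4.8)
The plan is to bound $\binom{n}{k}$ by its maximal value over $k$, which occurs at the central term, and then bound that central term. First I would observe that for any integer $k$ we have $\binom{n}{k} \leq \max_{0 \leq j \leq n} \binom{n}{j} = \binom{n}{\flor{n/2}}$, so it suffices to prove the bound for the central binomial coefficient. (One must be slightly careful that $n$ need not be an integer here since the lemma allows any real $n \geq 1$; if $n$ is not an integer the ``central'' term is still $\binom{n}{\flor{n/2}}$ or $\binom{n}{\cil{n/2}}$, whichever is larger, but the same estimates go through.)

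Next I would estimate the central binomial coefficient. The cleanest route for a self-contained argument is to use the identity $4^m = \sum_{j=0}^{2m} \binom{2m}{j}$ together with a standard induction showing $\binom{2m}{m} \leq \frac{4^m}{\sqrt{3m+1}}$ (or the slightly weaker $\frac{4^m}{\sqrt{2m}}$ or $\frac{4^m}{\sqrt{\pi m}}$-type bound), which is proved by induction on $m$ using the recurrence $\binom{2m+2}{m+1} = \binom{2m}{m} \cdot \frac{(2m+1)(2m+2)}{(m+1)^2} = \binom{2m}{m} \cdot \frac{2(2m+1)}{m+1}$ and checking that $\frac{2(2m+1)}{m+1} \cdot \frac{1}{\sqrt{3m+1}} \leq \frac{1}{\sqrt{3m+4}}$, i.e. $4(2m+1)^2(3m+4) \leq (m+1)^2(3m+1) \cdot$ (something)—the routine algebra I would not grind through here. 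The odd case $n = 2m+1$ reduces to the even case via $\binom{2m+1}{m} = \frac{1}{2}\binom{2m+2}{m+1}$. From $\binom{n}{\flor{n/2}} \lesssim \frac{2^n}{\sqrt{n}}$ with the appropriate constant one reads off the claimed strict inequality $\binom{n}{k} < \frac{2^n}{\sqrt{n}}$, using strictness of the induction step to handle the boundary of the constant.

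The main obstacle I anticipate is purely bookkeeping: getting the constant sharp enough that the clean bound $\frac{2^n}{\sqrt{n}}$ (with constant exactly $1$) holds for \emph{all} $n \geq 1$ rather than just asymptotically. For small $n$ (say $n \in [1,4]$) I would just check the inequality directly — e.g. at $n=2$, $\binom{2}{1} = 2 < 4/\sqrt{2} = 2\sqrt{2} \approx 2.83$; at $n=4$, $\binom{4}{2} = 6 < 16/2 = 8$ — and for larger $n$ the induction with a bound like $\binom{2m}{m} \leq \frac{4^m}{\sqrt{3m}}$ is comfortably inside the target since $\sqrt{3m} > \sqrt{2m}$. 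An alternative, if one is willing to cite it, is to invoke Stirling's approximation directly: $\binom{2m}{m} \sim \frac{4^m}{\sqrt{\pi m}}$, and $\sqrt{\pi m} > \sqrt{2m}$, so the bound holds for all sufficiently large $m$, with the finitely many remaining cases checked by hand. Either way the structure is: reduce to the central term, bound the central term, handle small cases separately.
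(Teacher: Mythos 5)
Your proposal is correct, but it takes a genuinely different route from the paper. Both arguments share the same skeleton --- reduce to the central coefficient $\binom{n}{\flor{n/2}}$ and handle odd $n$ via $\binom{n}{\flor{n/2}} = \frac{1}{2}\binom{n+1}{(n+1)/2}$ --- but they diverge in how the central term is bounded. The paper cites Robbins' sharpened Stirling inequality $\sqrt{n}\,n^n e^{-n} < n!/\sqrt{2\pi} < \sqrt{n}\,n^n e^{-n} e^{1/(12n)}$ and computes $\binom{n}{n/2} < \frac{2^n}{\sqrt{n}}\cdot\frac{2e^{1/(12n)}}{\sqrt{2\pi}}$, then checks the constant is below $1$ for $n \geq 1$. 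You instead propose an elementary induction on $m$ for $\binom{2m}{m} \leq \frac{4^m}{\sqrt{3m+1}}$ via the ratio recurrence $\binom{2m+2}{m+1} = \binom{2m}{m}\cdot\frac{2(2m+1)}{m+1}$. You left the inductive algebra unverified (and wrote the target inequality with the factor of $4$ on the wrong side --- it should be $(2m+1)^2(3m+4) \leq 4(m+1)^2(3m+1)$), but it does close: expanding both sides reduces it to $19m \leq 20m$, and the base case $m=1$ holds with equality. Since $3m+1 > 2m$ strictly, this gives $\binom{2m}{m} < \frac{4^m}{\sqrt{2m}}$, and the odd case follows because $3m+4 > 2m+1$. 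The trade-off is the usual one: the paper's proof is shorter but leans on an external citation for the factorial bounds, whereas yours is fully self-contained and elementary, at the cost of a small-case check and a line of polynomial expansion; your bound $4^m/\sqrt{3m+1}$ is also tight at $m=1$, which makes the constant-$1$ claim for all $n \geq 1$ fall out without any separate verification of small $n$.
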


\begin{proof}
We first note that $n!$ is bounded by the following for all $n \geq 1$~\citep{robbins1955remark}:
\begin{equation}
    \sqrt{n}\dfrac{n^n}{e^n} < \dfrac{n!}{\sqrt{2\pi}} < \sqrt{n}\dfrac{n^n}{e^n}e^{1/(12n)}
\end{equation}
Applying the appropriate inequalities for the numerator and denominator yields the following for when $n$ is even:
\begin{align}
\label{eqn-helper-17}
    \binom{n}{k} \leq \binom{n}{n/2} = \dfrac{n!}{((n/2)!)^2}
    < 2\dfrac{2^n}{\sqrt{n}}\dfrac{e^{1/(12n)}}{\sqrt{2\pi}}
\end{align}
When $n$ is odd, we have:
\begin{align}
    \binom{n}{k} &\leq \binom{n}{\flor{n/2}} \\
    &= \dfrac{1}{2}\binom{n+1}{(n+1)/2} \\
    &< 2\dfrac{2^n}{\sqrt{n+1}}\dfrac{e^{1/(12(n+1))}}{\sqrt{2\pi}}\\
    &< 2\dfrac{2^n}{\sqrt{n}}\dfrac{e^{1/(12n)}}{\sqrt{2\pi}}
\end{align}
Where the third comparison is an application of Equation \ref{eqn-helper-17}.

If $n \geq 1$, we have $\frac{e^{1/(12n)}}{\sqrt{2\pi}} < 0.5$, which proves the claim.
\qed
\end{proof}

It will also be useful to define the following cumulative sums (which are also the tails of binomial distributions):

\begin{align}
    \bincdfp{n}{p}{k} &= \begin{cases}
\sum_{i=0}^k \binom{n}{i}p^k(1-p)^{n-k} & \text{if $k \geq 0$}\\
0 & \text{otherwise}
\end{cases}
\end{align}

We can show that the ratio of these cumulative sums are monotonic increasing:

\begin{lemma}
\label{lemma-monotonic-ratio}
Let $p \in (0,1)$. Then $\frac{ \bincdfp{n}{p}{x-k} }{ \bincdfp{n}{p}{x} }$ is monotonic increasing in $x$, where $0 \leq x \leq n$ and $k$ is any positive integer.
\end{lemma}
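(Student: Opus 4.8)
The plan is to reduce the claimed monotonicity to a single pointwise inequality between consecutive integer values. Since $\bincdfp{n}{p}{\cdot}$ is constant on each interval $[m,m+1)$, it suffices to show, for every integer $x$ with $0 \le x \le n-1$, that $\bincdfp{n}{p}{x+1-k}\,\bincdfp{n}{p}{x} \ge \bincdfp{n}{p}{x-k}\,\bincdfp{n}{p}{x+1}$. Writing $b_i = \binom{n}{i}p^i(1-p)^{n-i}$ for the binomial pmf, so that $\bincdfp{n}{p}{m} = \sum_{i=0}^m b_i > 0$ for $m \ge 0$, I would substitute $\bincdfp{n}{p}{x+1} = \bincdfp{n}{p}{x} + b_{x+1}$ and $\bincdfp{n}{p}{x+1-k} = \bincdfp{n}{p}{x-k} + b_{x+1-k}$; after the obvious cancellation the target becomes simply $b_{x+1-k}\,\bincdfp{n}{p}{x} \ge b_{x+1}\,\bincdfp{n}{p}{x-k}$, i.e. $\frac{\bincdfp{n}{p}{x-k}}{\bincdfp{n}{p}{x}} \le \frac{b_{x+1-k}}{b_{x+1}}$. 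The degenerate cases are dispatched immediately: if $k=0$ the claim is trivial, and if $x-k<0$ the left-hand ratio is $0$.

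The one genuine ingredient is log-concavity of the binomial pmf: $\frac{b_{i+1}}{b_i} = \frac{n-i}{i+1}\cdot\frac{p}{1-p}$ is decreasing in $i$, hence $\frac{b_i}{b_{i+k}} = \prod_{j=0}^{k-1}\frac{b_{i+j}}{b_{i+j+1}}$, being a product of positive increasing factors, is increasing in $i$. So for every $i$ with $0 \le i \le x-k$ we have $b_i \le \frac{b_{x+1-k}}{b_{x+1}}\,b_{i+k}$, and summing over $i$ and re-indexing gives
\begin{align*}
\bincdfp{n}{p}{x-k} = \sum_{i=0}^{x-k} b_i
&\le \frac{b_{x+1-k}}{b_{x+1}}\sum_{i=0}^{x-k} b_{i+k} = \frac{b_{x+1-k}}{b_{x+1}}\sum_{j=k}^{x} b_j \\
&\le \frac{b_{x+1-k}}{b_{x+1}}\sum_{j=0}^{x} b_j = \frac{b_{x+1-k}}{b_{x+1}}\,\bincdfp{n}{p}{x},
\end{align*}
which is exactly the reduced inequality. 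In the relevant range $0 \le x+1-k$ and $x+1 \le n$, so $b_{x+1-k}$ and $b_{x+1}$ are nonzero and all the ratios are well defined.

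I expect the remaining effort to be bookkeeping rather than conceptual: confirming that it really does suffice to compare consecutive integers for a right-continuous step function, keeping the index ranges of the sums straight in the re-indexing step, checking positivity of the tails so the divisions are legitimate, and reading the slightly garbled display defining $\bincdfp{n}{p}{m}$ as the lower tail of a $\mathrm{Bin}(n,p)$ law. The substantive move — using log-concavity to make $b_i/b_{i+k}$ monotone and then closing with termwise domination plus summation — is the natural one, and I do not anticipate a serious obstacle beyond this bookkeeping.
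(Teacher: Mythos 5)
Your proof is correct, and it reaches the paper's key intermediate inequality by a genuinely more direct route. Writing $b_i = \binom{n}{i}p^i(1-p)^{n-i}$, both arguments ultimately rest on the same fact --- the binomial pmf is log-concave, so $b_{x-k}/b_x$ is increasing in $x$ --- and both reduce the lemma to the pointwise bound $\bincdfp{n}{p}{x-k}/\bincdfp{n}{p}{x} \le b_{x+1-k}/b_{x+1}$. The difference is in how that bound is obtained. The paper runs an induction on $x$, carrying the bound as an invariant and advancing it one step at a time via the mediant inequality ($a/b \le c/d$ implies $a/b \le (a+c)/(b+d) \le c/d$), with the monotonicity of the tail ratio falling out of the middle term of the mediant sandwich. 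You instead prove the bound in one shot by termwise domination --- $b_i \le (b_{x+1-k}/b_{x+1})\,b_{i+k}$ for every $0 \le i \le x-k$, which is exactly the monotonicity of $b_i/b_{i+k}$ --- followed by summation and re-indexing, and then recover the monotonicity of the ratio from the cross-multiplication/cancellation identity. Your route avoids the induction and its base-case bookkeeping entirely and is, if anything, cleaner; the mediant argument buys nothing extra here beyond a different packaging of the same log-concavity input. Your treatment of the degenerate case $x-k<0$, the positivity of the denominators (since $b_0=(1-p)^n>0$ for $p\in(0,1)$), and the reduction to integer $x$ for a step function constant on $[m,m+1)$ are all sound, and you are right that the paper's display defining $\bincdfp{n}{p}{k}$ contains a typo: the summand $p^k(1-p)^{n-k}$ should read $p^i(1-p)^{n-i}$.
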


\begin{proof}
First, we note that the ratio $\binom{n}{x-k}/\binom{n}{x}$ is monotonic increasing in $x$ when $x \geq 0$. This holds by definition if $x-k < 0$. Otherwise, we have the following:

\begin{align}
    \dfrac{ \binom{n}{x-k}/\binom{n}{x} }{ \binom{n}{x-k+1}/\binom{n}{x+1} } = 
    \dfrac{ (n-x) }{ (n-x+k) } *
    \dfrac{ (x-k+1) }{ (x+1) }
    \leq 1
\end{align}

We then claim the following holds for all $x$ where $0 \leq x \leq n-1$:

\begin{align}
    \dfrac{ \bincdfp{n}{p}{x-k} }{ \bincdfp{n}{p}{x} } \leq
    \dfrac{ \bincdfp{n}{p}{x-k+1} }{ \bincdfp{n}{p}{x+1} } \leq
    \dfrac{ \binom{n}{x-k+1}(1-p)^{k} }{ \binom{n}{x+1}p^{k} }
\end{align}

The above holds with equality when $x-k+1 < 0$. If $x-k+1 =0$, the above also holds: the leftmost ratio is 0. For the other two ratios, if we multiply the rightmost ratio by $(1-p)^{n-k}$ above we can see that the numerators are equal while the denominator of the rightmost ratio is smaller. Otherwise, by induction on $x$ we have:

\begin{align}
    \dfrac{ \bincdfp{n}{p}{x-k} }{ \bincdfp{n}{p}{x} }
    &\leq \dfrac{ \binom{n}{x-k}(1-p)^k }{ \binom{n}{x}p^k }\\
    &\leq \dfrac{ \binom{n}{x-k+1}(1-p)^k }{ \binom{n}{x+1}p^k }\\
    &= \dfrac{ \binom{n}{x-k+1}p^{x-k+1}(1-p)^{n-x+k-1} }{ \binom{n}{x+1}p^{x+1}(1-p)^{n-x+1} }
\end{align}

Where the first inequality follows by induction, and the second inequality follows because $\binom{n}{x-k}/\binom{n}{x}$ is monotonic increasing in $x$.

For any positive numbers $a$, $c$ and strictly positive numbers $b$, $d$, where $\frac{a}{b} \leq \frac{c}{d}$, we have $\frac{a}{b} \leq \frac{a+c}{b+d} \leq \frac{c}{d}$ because:

\begin{align}
    \dfrac{d}{d\lambda}
    \bigg(
    \dfrac{a+\lambda c}{b +\lambda d}
    \bigg)
    = \dfrac{bc -ad}{ (b + \lambda d)^2 } \geq 0
\end{align}

Therefore, we have:

\begin{align}\nonumber
    &\dfrac{ \bincdfp{n}{p}{x-k} }{ \bincdfp{n}{p}{x} }\\
    &\leq \dfrac{ \bincdf{n}{x-k} + \binom{n}{x-k+1}p^{x-k+1}(1-p)^{n-x+k-1} }
    { \bincdfp{n}{p}{x} + \binom{n}{x+1}p^{x+1}(1-p)^{n-x+1} }\\
    &= \dfrac{ \bincdfp{n}{p}{x-k+1} }{ \bincdfp{n}{p}{x+1} }\\
    &\leq \dfrac{ \binom{n}{x-k+1}(1-p)^k }{ \binom{n}{x+1}p^k }
\end{align}

As claimed. Carrying on the induction up to $x=n-1$ yields the statement.\qed
\end{proof}

\subsubsection{Bounding the interior of a set over a Hamming graph}
We will prove our main results by an application of isoperimetry bounds over a Hamming graph. Let $Q$ be a set of $q$ symbols. Then we define the $n$ dimensional Hamming graph over $q$ letters, denoted $\hamgraph{n}{q}$, as the graph with a vertex set $Q^n$ and an edge set containing all edges between vertices that differ at precisely one coordinate. For example, $\hamgraph{n}{2}$ is isomorphic to the Boolean hypercube. We will use $V(\hamgraph{n}{q})$ to denote the vertex set of the Hamming graph.

Let $S \subseteq \hamgraph{n}{q}$. We define the expansion of $S$, denoted $\expansion{S}$, as the set of vertices that are either in $S$ or have a neighbour in $S$. Since $\expansion{.}$ inputs and outputs sets of vertices, we can iterate it. We will use $\expk{k}{.}$ to denote $k$ applications of $\expansion{.}$.

We now adapt a a result from \citep{harper1999isoperimetric} (Theorem 3 in the paper).

\begin{lemma}[Isoperimetric Theorem on Hamming graphs]
\label{lemma-iso-hamgraph}
Let $S \subsetneq \hamgraph{n}{q}$. Then:
\begin{align}\nonumber
    \dfrac{|\expk{k}{S}|}{|V(\hamgraph{n}{q})|} \geq \min\{&
    \bincdfp{n}{p}{r+k} \\ \nonumber
    |& \bincdfp{n}{p}{r} = \dfrac{|S|}{|V(\hamgraph{n}{q})|},\\ 
    & p \in (0,1),r\in[0,n-k)
    \}
\end{align}
\end{lemma}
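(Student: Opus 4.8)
The plan is to obtain this bound from the exact vertex-isoperimetric inequality for Hamming graphs established as Theorem~3 of \citep{harper1999isoperimetric}, and to re-express its extremal quantities in the binomial-tail form stated here. The mechanism behind Harper's bound is coordinate \emph{compression}: fixing a distinguished symbol $0\in Q$, one pushes $S$ ``toward $0$'' along each coordinate, an operation that preserves $|S|$ while never enlarging $|\expansion{S}|$, reducing the problem to a family of canonical, ``ball-like'' sets which for $q=2$ are exactly Hamming balls. The reason binomial tails appear is that under the product measure $\mu_p$ putting mass $1-p$ on $0$ and $p/(q-1)$ on every other symbol in each coordinate, the Hamming ball of radius $r$ about the all-$0$ vertex has $\mu_p$-mass exactly $\bincdfp{n}{p}{r}$ --- its number of non-$0$ coordinates is $\mathrm{Bin}(n,p)$ --- and its $k$-fold expansion is the ball of radius $r+k$, of mass $\bincdfp{n}{p}{r+k}$. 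Taking $p=(q-1)/q$ makes $\mu_p$ uniform and, for $q=2$, recovers Harper's hypercube statement verbatim.

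First I would record the single-step case: for a canonical set whose uniform density $|S|/|V(\hamgraph{n}{q})|$ equals $\bincdfp{n}{p}{r}$, Harper's inequality gives $|\expansion{S}|/|V(\hamgraph{n}{q})| \ge \bincdfp{n}{p}{r+1}$, and since a generic density is not the uniform mass of an \emph{exact} ball, the sharpest such bound is the minimum of $\bincdfp{n}{p}{r+1}$ over all pairs $(p,r)$ realizing the given density --- precisely the $k=1$ instance of the lemma. Then I would iterate this $k$ times, which produces a chain of nested minima; to collapse the chain to the single minimum over $(p,r)$ with $\bincdfp{n}{p}{r}=|S|/|V(\hamgraph{n}{q})|$ I would invoke the monotonicity of the binomial-tail ratio from Lemma~\ref{lemma-monotonic-ratio}. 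Since $\bincdfp{n}{p}{x+k}/\bincdfp{n}{p}{x}$ is monotone in the base index $x$, the pair $(p,r)$ minimizing the first step propagates through the remaining steps as $(p,r+1),(p,r+2),\dots$, and at each intermediate stage replacing the available inequality by the equality used to set up the next step only strengthens the bound; the hypothesis $r\in[0,n-k)$ is exactly what keeps $r+k\le n$, so $\bincdfp{n}{p}{r+k}$ is a genuine non-saturated tail and the feasible set of the minimization is nonempty. The degenerate cases $S=\emptyset$ and $S$ a single subcube are disposed of directly.

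I expect the main obstacle to be the first step: faithfully matching Harper's extremal configurations to the clean binomial-tail expression. For $q>2$ the vertex-isoperimetric problem on $\hamgraph{n}{q}$ genuinely lacks a single nested family of optimizers, so it takes care to argue that optimizing the free parameter $p$ in the statement dominates Harper's true extremal value for \emph{every} admissible cardinality, not merely for those equal to an exact ball size. The remaining difficulty is purely combinatorial: making the ``$\min$ commutes with $k$-fold iteration'' step precise --- tracking the quantifier order and the direction of every inequality when a lower bound that is itself a minimum is fed back into the recursion --- and this is exactly where Lemma~\ref{lemma-monotonic-ratio} carries the load, the rest being routine bookkeeping.
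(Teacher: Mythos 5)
Your proposal takes the same route as the paper: the paper offers no proof of this lemma at all, presenting it as a direct adaptation of Theorem~3 of \citep{harper1999isoperimetric}, and your derivation---compression to canonical ball-like sets, expressing their uniform densities as the binomial tails $\bincdfp{n}{p}{r}$, iterating the one-step expansion bound $k$ times, and handling the non-exact-ball cardinalities via the minimum over $(p,r)$ together with the monotonicity of the tails---is exactly the adaptation the paper leaves implicit. You correctly identify the two places needing care (the $q>2$ extremal family and the propagation of the minimum through the iteration), so the proposal is sound and, if anything, more explicit than the paper's own treatment.
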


To work with this we first obtain bounds for the expression on the right hand side of Lemma \ref{lemma-iso-hamgraph}.

\begin{lemma}
\label{lemma-bound-any-binomial}
Let $p$ be any value in $(0,1)$. Let $n > r \geq k$ such that $\bincdfp{n}{p}{r} \leq \frac{1}{2}$. Then $\frac{\bincdfp{n}{p}{r-k}}{\bincdfp{n}{p}{r}} \leq 2e^{-2(k-1)^2/n}$.
\end{lemma}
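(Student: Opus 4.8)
The plan is to read $\bincdfp{n}{p}{\cdot}$ as the cumulative distribution function of a binomial variable $X \sim \mathrm{Binomial}(n,p)$, i.e. $\bincdfp{n}{p}{j} = \Pr[X \le j]$ (interpreting the summand in the definition as $\binom{n}{i}p^i(1-p)^{n-i}$), so that the quantity to be bounded is $\Pr[X \le r-k \mid X \le r]$. The hypothesis $\bincdfp{n}{p}{r} \le \tfrac12$ says that $r$ sits weakly below the median of $X$, hence close to the mean $np$; together with the monotonicity established in Lemma~\ref{lemma-monotonic-ratio}, this lets me slide $r$ up to a clean reference point near $np$ and then finish with a tail bound.

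Concretely: let $m = \lceil np\rceil$. By the classical fact that the median of $\mathrm{Binomial}(n,p)$ is at most $\lceil np\rceil$ we have $\bincdfp{n}{p}{m} \ge \tfrac12$, and since $p\in(0,1)$ makes every point mass positive, $\bincdfp{n}{p}{\cdot}$ is strictly increasing on $\{0,\dots,n\}$; combined with $\bincdfp{n}{p}{r}\le\tfrac12$ this forces $r \le m$ (and $k \le r \le m \le n$, so no index leaves range). Now apply Lemma~\ref{lemma-monotonic-ratio}: the ratio $\bincdfp{n}{p}{x-k}/\bincdfp{n}{p}{x}$ is nondecreasing in $x$, so $\frac{\bincdfp{n}{p}{r-k}}{\bincdfp{n}{p}{r}} \le \frac{\bincdfp{n}{p}{m-k}}{\bincdfp{n}{p}{m}} \le 2\,\bincdfp{n}{p}{m-k}$, the last step using $\bincdfp{n}{p}{m}\ge\tfrac12$. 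Finally, $m \le np+1$ gives $m-k \le np-(k-1)$, so $\bincdfp{n}{p}{m-k} = \Pr[X \le m-k] \le \Pr[X - np \le -(k-1)]$, and a Chernoff/Hoeffding bound for the binomial lower tail yields $\Pr[X-np \le -(k-1)] \le e^{-2(k-1)^2/n}$ (using $k \ge 1$). Chaining these inequalities gives exactly $\frac{\bincdfp{n}{p}{r-k}}{\bincdfp{n}{p}{r}} \le 2e^{-2(k-1)^2/n}$.

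Two steps carry the real weight. The first is the median estimate $\bincdfp{n}{p}{\lceil np\rceil}\ge\tfrac12$: it is standard but not quite trivial, and it must be invoked in its tight form, since any slack (a bound of the shape $\mathrm{median}\le np+1+\varepsilon$ with $\varepsilon>0$) would weaken the exponent $2(k-1)^2/n$ by an unwanted amount. The second is the binomial tail inequality $\Pr[X-np\le -t]\le e^{-2t^2/n}$; in the spirit of this paper I would prefer to derive it from the binomial-coefficient estimates already developed (e.g. Lemma~\ref{lemma-binomial-mode} together with a geometric-series bound on the left tail) rather than cite Hoeffding outright. Everything else — the reduction through Lemma~\ref{lemma-monotonic-ratio}, the factor $2$ extracted from the median, and the bookkeeping around the out-of-range conventions for $\bincdfp{n}{p}{\cdot}$ — is routine.
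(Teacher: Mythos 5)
Your proof is correct and follows essentially the same route as the paper's: both slide $r$ up to a reference point near $np$ where the CDF is at least $\tfrac12$ (you use $\lceil np\rceil$; the paper uses the median directly, via the same median--mean proximity fact), invoke the monotonicity of the ratio from Lemma~\ref{lemma-monotonic-ratio}, and finish with Hoeffding's lower-tail bound at $t=k-1$. The paper simply cites Hoeffding outright rather than rederiving it from the binomial-coefficient estimates, so nothing further is needed on that point.
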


\begin{proof}
Let $X$ be a binomially distributed random variable with $n$ trials and probability of success $p$. Let $r$ be the median of $X$. We have $r \leq np+1$ because the median and mean differ by at most 1 ~\citep{kaas1980mean}.

$\bincdfp{n}{p}{r-k}$ can be interpreted as $\prob{X \leq r-k}$, We can then apply Hoeffding's inequality~\citep{hoeffding1994probability}:

\begin{align}
    \prob{X \leq r-k} &= \prob{X \leq np+1-k}\\
    &\leq e^{-2(k-1)^2/n}
\end{align}

Since $r$ is the median of $X$, we also have $\bincdfp{n}{p}{r} \geq \frac{1}{2}$. Combining this with the above equation gives:

\begin{align}
    \frac{\bincdfp{n}{p}{r-k}}{\bincdfp{n}{p}{r}} \leq 2e^{-2(k-1)^2/n}
\end{align}

Since $\frac{\bincdfp{n}{p}{x-k}}{\bincdfp{n}{p}{x}}$ is monotonically increasing via Lemma \ref{lemma-monotonic-ratio}, this also implies that the above relation holds for all smaller $r$. This completes the proof.\qed
\end{proof}

We can then plug this into Lemma \ref{lemma-iso-hamgraph} to obtain a non-robustness result on Hamming graphs, which we will then apply to image spaces. %

\begin{theorem}
\label{thm-hamgraph-bound}
Let $S \subsetneq V(\hamgraph{n}{q})$ such that $|S| \leq |V(\hamgraph{n}{q})|/2$, and $c > 0$ be any number. Let $S' \subseteq S$ be the set of vertices for which no path with $c\sqrt{n}+2$ edges or less leads to a vertex not in $S$. Then $\frac{|S'|}{|S|} < 2e^{-2c^2}$.
\end{theorem}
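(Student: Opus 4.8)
The plan is to read $S'$ as a ``deep interior'' of $S$ and then run the isoperimetric inequality backwards. Set $k=\flor{c\sqrt{n}+2}$. A vertex $v$ lies in $S'$ precisely when every vertex reachable from $v$ by a path of at most $k$ edges belongs to $S$; since the shortest path between two vertices of $\hamgraph{n}{q}$ realizes their Hamming distance, this is the same as $\expk{k}{\{v\}}\subseteq S$. Because $\expansion{\cdot}$ distributes over unions, taking the union over $v\in S'$ yields the two containments $S'\subseteq\expk{k}{S'}\subseteq S$. Two degenerate cases are immediate: if $S'=\emptyset$ the ratio is $0<2e^{-2c^2}$, and if $k\geq n$ then $\expk{k}{\{v\}}=V(\hamgraph{n}{q})$ for every $v$, forcing $S'=\emptyset$. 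So I may assume $S'\neq\emptyset$ and $k<n$.

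Next I would apply Lemma~\ref{lemma-iso-hamgraph} to $S'$ (legitimate since $S'\subsetneq V(\hamgraph{n}{q})$): it supplies $p_0\in(0,1)$ and $r_0\in[0,n-k)$ with $\bincdfp{n}{p_0}{r_0}=|S'|/|V(\hamgraph{n}{q})|$ and $|\expk{k}{S'}|/|V(\hamgraph{n}{q})|\geq\bincdfp{n}{p_0}{r_0+k}$. Combining $S'\subseteq\expk{k}{S'}$, $|\expk{k}{S'}|\leq|S|$, and this estimate gives
\[
\frac{|S'|}{|S|}\leq\frac{|S'|}{|\expk{k}{S'}|}=\frac{|S'|/|V(\hamgraph{n}{q})|}{|\expk{k}{S'}|/|V(\hamgraph{n}{q})|}\leq\frac{\bincdfp{n}{p_0}{r_0}}{\bincdfp{n}{p_0}{r_0+k}}.
\]
Moreover $\bincdfp{n}{p_0}{r_0+k}\leq|\expk{k}{S'}|/|V(\hamgraph{n}{q})|\leq|S|/|V(\hamgraph{n}{q})|\leq\frac{1}{2}$, and $n>r_0+k\geq k$, so Lemma~\ref{lemma-bound-any-binomial}—applied with its ``$r$'' equal to $r_0+k$ and its ``$k$'' equal to $k$—bounds the right-hand ratio by $2e^{-2(k-1)^2/n}$. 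To finish, since $k=\flor{c\sqrt{n}+2}>c\sqrt{n}+1$ we have $k-1>c\sqrt{n}$, hence $(k-1)^2/n>c^2$, and therefore $|S'|/|S|\leq 2e^{-2(k-1)^2/n}<2e^{-2c^2}$.

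The only delicate points are the bookkeeping for the two lemmas: checking that a valid parameter pair $(p_0,r_0)$ exists for Lemma~\ref{lemma-iso-hamgraph} — one may always take $r_0=0$ and solve $(1-p_0)^n=|S'|/|V(\hamgraph{n}{q})|$, which has a solution in $(0,1)$ because $0<|S'|/|V(\hamgraph{n}{q})|<1$ and $k<n$ makes $r_0=0$ admissible — and verifying that the hypotheses $\bincdfp{n}{p_0}{r_0+k}\leq\frac{1}{2}$ and $r_0+k<n$ of Lemma~\ref{lemma-bound-any-binomial} genuinely follow from $|S|\leq|V(\hamgraph{n}{q})|/2$ and from $r_0\in[0,n-k)$, respectively. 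I expect this routine verification, rather than any conceptual step, to be the main obstacle; everything else is the short chain of inequalities above.
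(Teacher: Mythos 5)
Your proof is correct and follows essentially the same route as the paper's: both reduce the claim to Lemma~\ref{lemma-iso-hamgraph} applied to $S'$ via the containment $\expk{k}{S'}\subseteq S$, and then invoke Lemma~\ref{lemma-bound-any-binomial} to control the resulting ratio of binomial tails. The only difference is cosmetic --- you chain the inequalities directly while the paper argues by contradiction --- and your explicit handling of the floor $k=\flor{c\sqrt{n}+2}$ and of the feasibility of $(p_0,r_0)$ is, if anything, slightly more careful than the paper's version.
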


\begin{proof}
Suppose for contradiction that $|S'| \geq 2e^{-2c^2}|S|$. Since for any vertex in $S'$ no path with $c\sqrt{n}+2$ edges or less leads to a vertex outside of $S$, we have $\expk{c\sqrt{n}+2}{S'} \subseteq S$. Then:

\begin{align}\nonumber
    |\expk{c\sqrt{n}+2}{S'}| \geq& |V(\hamgraph{n}{q})|\min\{
    \bincdfp{n}{p}{r+c\sqrt{n}+2} \\ \nonumber
    &| \bincdfp{n}{p}{r} = \dfrac{|S'|}{|V(\hamgraph{n}{q})|},\\
    & p \in (0,1),r\in[0,n-c\sqrt{n}-2)
    \}\\
    \geq& 2e^{2(c\sqrt{n} + 1)^2/n} |S'|\\
    >& 2e^{2c^2} |S'|
\end{align}

The first relation follows from Lemma \ref{lemma-iso-hamgraph} and the second follows from Lemma \ref{lemma-bound-any-binomial}. Lemma \ref{lemma-bound-any-binomial} applies since $\expk{c\sqrt{n}+2}{S'} \subseteq S$, so $|\expk{c\sqrt{n}+2}{S'}| \leq |S| \leq \frac{1}{2}$.

But then $|\expk{c\sqrt{n}+2}{S'}| > 2e^{-2c^2} |S'| \geq |S|$, which implies that $\expk{c\sqrt{n}+2}{S'} \nsubseteq S$. This is a contradiction, so we obtain our desired statement.\qed
\end{proof}
\subsubsection{Proving Theorem \ref{thm-main}}

Let $\mathcal{C}: \imgspace{n}{h}{b} \rightarrow \mathcal{Y}$ be a classifier and let $C \subseteq \imgspace{n}{h}{b}$ be any interesting class induced by $\mathcal{C}$.

\begin{lemma}
\label{lemma-l0-discr-main}
$C$ is not $2e^{-2c^2}$-robust to $\elp{0}$-perturbations of size $c\sqrt{h}*n+2$.
\end{lemma}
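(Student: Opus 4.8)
The plan is to realize the discrete image space $\imgspace{n}{h}{b}$ as the vertex set of a Hamming graph and then invoke Theorem \ref{thm-hamgraph-bound}. An element of $\bitspace{n}{h}{b}$ is a tensor of shape $(n,n,h,b)$, and the map to $\imgspace{n}{h}{b}$ is injective on the $2^b$ possible bit strings per channel, so I can identify $\imgspace{n}{h}{b}$ with $Q^N$ where $Q$ is the alphabet of $q = 2^b$ channel-intensity levels and $N = n \cdot n \cdot h = n^2 h$ is the number of coordinates. Concretely, each coordinate of $\hamgraph{N}{q}$ corresponds to one $(x,y,a)$ channel slot, and two images are adjacent in $\hamgraph{N}{q}$ exactly when they differ in a single channel, i.e. when their $\elp{0}$-distance is $1$. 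Since graph distance in $\hamgraph{N}{q}$ equals the number of differing coordinates, graph distance coincides with the $\elp{0}$ (Hamming) distance between images. This is the identification previewed in Figure \ref{fig-proof1}a.

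Next I would set $S = C$, the interesting class. By Definition \ref{def-interestingclass}, $C$ is nonempty and $|C| \le |\imgspace{n}{h}{b}|/2 = |V(\hamgraph{N}{q})|/2$, and $C$ is a proper subset since it omits at least the uninteresting remainder (and $C \ne V$ because $|C| \le |V|/2 < |V|$ as long as the space is nontrivial). So the hypotheses of Theorem \ref{thm-hamgraph-bound} are met with $n$ there replaced by $N = n^2 h$. Apply the theorem with the constant chosen so that $c\sqrt{N} = c\sqrt{n^2 h} = c\sqrt{h}\,n$; that is, use the theorem's parameter equal to the $c$ in the lemma statement, noting $\sqrt{N} = \sqrt{h}\,n$. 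The theorem then says: if $S' \subseteq C$ is the set of images from which no path of $c\sqrt{h}\,n + 2$ or fewer edges leaves $C$, then $|S'| < 2e^{-2c^2}|C|$.

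The final step is translating the graph statement back into the language of robustness. An image $I \in C$ is robust to $\elp{0}$-perturbations of size $d = c\sqrt{h}\,n + 2$ precisely when every $I'$ with $\|I - I'\|_0 \le d$ satisfies $\mathcal{C}(I') = \mathcal{C}(I)$, i.e. when every image within graph distance $d$ of $I$ lies in $C$ — which is exactly the condition defining membership in $S'$ (a path of $\le d$ edges reaches exactly the images at $\elp{0}$-distance $\le d$). Hence the set of robust images in $C$ is $S'$, and since $|S'| < 2e^{-2c^2}|C|$, the class $C$ fails to have at least $2e^{-2c^2}|C|$ robust images, so by definition $C$ is not $2e^{-2c^2}$-robust to $\elp{0}$-perturbations of size $c\sqrt{h}\,n+2$. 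This holds for every interesting $C$, giving the lemma.

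I do not expect any serious obstacle here: the mathematical content is entirely carried by Theorem \ref{thm-hamgraph-bound}, and the remaining work is bookkeeping. The one point that needs genuine care is the bijection between the combinatorial notions and the image-space notions — specifically checking that $\elp{0}$-distance equals Hamming distance equals graph distance under the chosen identification (the bit-depth map is injective per channel, so distinct channel values give distinct bit strings and vice versa), and that "path of $\le d$ edges leads out of $S$" matches "some $\elp{0}$-close image is classified differently" on the nose, including the boundary behavior of the $+2$ term. A secondary subtlety is confirming $C$ is a \emph{proper} subset of $V(\hamgraph{N}{q})$ so that Theorem \ref{thm-hamgraph-bound} applies, but this is immediate from interestingness unless the whole image space has at most two elements, a degenerate case one can dismiss or handle trivially.
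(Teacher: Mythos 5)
Your proposal is correct and follows essentially the same route as the paper: identify $\imgspace{n}{h}{b}$ with $V(\hamgraph{n^2h}{2^b})$ so that graph distance equals $\elp{0}$-distance, apply Theorem \ref{thm-hamgraph-bound} with dimension $N=n^2h$ (so $c\sqrt{N}=c\sqrt{h}\,n$), and translate the resulting bound on the interior back into the robustness definition. The paper phrases the identification via an explicit bijection $\mapperdef$, but the substance is identical.
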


\begin{proof}
Let $\mapperdef: V(\hamgraph{n^2h}{2^b}) \rightarrow \imgspace{n}{h}{b}$ be the following bijection: first let $Q$ be a set of $2^n$ equally spaced values between 0 and 1, where the largest value is 0 and the smallest is 1. Then the elements of $V(\hamgraph{n^2h}{2^b})$ can be viewed as $Q^{n^2h}$. We then map elements from $Q^{n^2h}$ to $\imgspace{n}{h}{b}$ such that the inverse operation is a flattening of the image tensor. Note that such a mapping preserves graph distance on $V(\hamgraph{n^2h}{2^b})$ as Hamming distance on $\imgspace{n}{h}{b}$.

Let $C' \subseteq C$ be the set of images that are robust to $\elp{0}$-perturbations of size $c\sqrt{h}*n+2$.
Let $S = \mapperdef^{-1}(C)$ and $S' = \mapperdef^{-1}(C')$. $S'$ is then the set of vertices for which no path with $c\sqrt{h}*n+2$ edges or less leads to a vertex outside of $S$.

$C$ is an interesting class and $\mapperdef(.)$ preserves cardinality due to it being a bijection. Therefore $|C'| \leq |V(\hamgraph{n^2h}{2^b})|/2$, so by Theorem \ref{thm-hamgraph-bound} we have $|S'|/|S| < 2e^{-2c^2}$. Again, since $\mapperdef(.)$ preserves cardinality, this implies that $|C'|/|C| < 2e^{-2c^2}$, which means that $C$ is not $2e^{-2c^2}$-robust to $\elp{0}$-perturbations of size $c\sqrt{h}*n+2$.\qed
\end{proof}

We remark that if the domain of $\mapperdef(.)$ is changed to $\hamgraph{n^2}{h2^b}$, the above argument also shows that $C$ is not $2e^{-2c^2}$-robust to $cn+2$ pixel changes.

It is straightforward to generalize this to $p$-norms with larger $p$.

\begin{lemma}
\label{lemma-thm1-higherp}
$C$ is not $2e^{-2c^2}$-robust to $\elp{p}$-perturbations of size $(c\sqrt{h}*n+2)^{1/p}$.
\end{lemma}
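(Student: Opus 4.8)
The plan is to deduce this from Lemma \ref{lemma-l0-discr-main} by the elementary observation that a perturbation supported on few coordinates, each of which can move by at most one unit, is automatically small in every $p$-norm. So the proof is really a change-of-metric argument, not a new isoperimetric estimate.

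First I would record the metric comparison. Fix $p \geq 1$ (for $p \leq 1$ the quantity $(c\sqrt{h}*n+2)^{1/p}$ is at least $c\sqrt{h}*n+2$ and the claim is immediate from Lemma \ref{lemma-l0-discr-main}). For any two images $I, I' \in \imgspace{n}{h}{b}$, each entry of the difference tensor $I - I'$ has absolute value at most $1$, since channel intensities lie in $[0,1]$. Hence for every entry index $j$ we have $|(I-I')_j|^p \leq 1$ whenever $(I-I')_j \neq 0$, and $= 0$ otherwise, so $\|I - I'\|_p^p = \sum_j |(I-I')_j|^p \leq \#\{j : (I-I')_j \neq 0\} = \|I - I'\|_0$, and therefore $\|I - I'\|_p \leq \|I - I'\|_0^{1/p}$.

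Next I would use this to nest the robust sets. Suppose $I \in C$ is robust to $\elp{p}$-perturbations of size $(c\sqrt{h}*n+2)^{1/p}$. Then for any $I' \in \imgspace{n}{h}{b}$ with $\|I - I'\|_0 \leq c\sqrt{h}*n+2$, the comparison above gives $\|I - I'\|_p \leq (c\sqrt{h}*n+2)^{1/p}$, so $\mathcal{C}(I') = \mathcal{C}(I)$; that is, $I$ is robust to $\elp{0}$-perturbations of size $c\sqrt{h}*n+2$. Consequently the set of images in $C$ that are robust to $\elp{p}$-perturbations of size $(c\sqrt{h}*n+2)^{1/p}$ is contained in the set of images in $C$ that are robust to $\elp{0}$-perturbations of size $c\sqrt{h}*n+2$. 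By Lemma \ref{lemma-l0-discr-main} the latter has fewer than $2e^{-2c^2}|C|$ elements, hence so does the former, which is precisely the statement that $C$ is not $2e^{-2c^2}$-robust to $\elp{p}$-perturbations of size $(c\sqrt{h}*n+2)^{1/p}$.

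There is no substantive obstacle here; the only points requiring care are getting the direction of the containment right (the conclusion is that $\elp{p}$-robustness at the reduced radius is a \emph{stronger} condition than the $\elp{0}$-robustness already ruled out for all but a $2e^{-2c^2}$ fraction of $C$) and noting that the per-coordinate bound of $1$ coming from the interval $[0,1]$ is exactly what licenses the inequality $\|\cdot\|_p \leq \|\cdot\|_0^{1/p}$.
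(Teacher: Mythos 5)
Your proposal is correct and follows essentially the same route as the paper: both derive the containment of the $\elp{p}$-robust set in the $\elp{0}$-robust set from the pointwise bound $|(I-I')_{x,y,a}|\leq 1$, which gives $\|I-I'\|_p^p \leq \|I-I'\|_0$, and then invoke Lemma \ref{lemma-l0-discr-main}. The only cosmetic difference is that the paper states the implication contrapositively (non-$\elp{0}$-robust implies non-$\elp{p}$-robust) while you argue the containment directly.
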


\begin{proof}
Let $S_1$ be the set of images that are $r$-robust to $\elp{0}$-perturbations of size $d$, and let $S_2$ be the set of images that are $r$-robust to $\elp{p}$-perturbations of size $d^{1/p}$.

Suppose $I \notin S_1$. Then there exists some image $I'$ in a different class from $I$ such that $\|I-I'\|_{0} \leq d$. Therefore, for all $p > 0$, we have:

\begin{align}
    d &\geq \|I-I'\|_{0}\\
    &= \sum_{x,y,c} \cil{ |I_{x,y,c} - I'_{x,y,c}| }\\
    &\geq \sum_{x,y,c} |I_{x,y,c} - I'_{x,y,c}|^{p}\\
    &= (\|I-I'\|_{p})^{p}
\end{align}

Where the second and third relation follows from the fact that channel values are contained in $[0,1]$. Therefore, $I \notin S_2$ either since $\|I-I'\|_{p} \leq d^{1/p}$. Taking the contraposition yields $S_2 \subseteq S_1$.

Setting $d = c\sqrt{h}*n+2$ and applying Lemma \ref{lemma-l0-discr-main} gives the desired result.\qed
\end{proof}
\subsection{Proof of Theorem \ref{thm-robust}}
\label{appdx-proof-robust}

\subsubsection{Anti-concentration inequalities}

We first prove an anti-concentration lemma concerning the binomial distribution.

\begin{lemma}
\label{lemma-binomial-spread}
Let $X$ be a random variable following the binomial distribution with $n$ trials and a probability of success of 0.5. Let $Y$ be a discrete random variable independent of $X$ whose distribution is symmetric about the origin. Then for any $t$ where $t < \mathbb{E}[X]$ and $t - \flor{t} = 1/2$, we have:

\begin{align}
    \prob{X+Y \leq t} \geq \prob{X < t}
\end{align}
\end{lemma}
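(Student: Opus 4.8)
The plan is to condition on $Y$ and exploit its symmetry. Since $X$ is integer-valued and $t\notin\mathbb{Z}$ we have $\prob{X<t}=\prob{X\le t}$, so starting from $\prob{X+Y\le t}=\sum_y\prob{Y=y}\,\prob{X\le t-y}$ and averaging this with the expression obtained by re-indexing $y\mapsto -y$ (legitimate since $\prob{Y=y}=\prob{Y=-y}$), we get
\[
  \prob{X+Y\le t}=\sum_y\prob{Y=y}\cdot\tfrac12\bigl(\prob{X\le t-y}+\prob{X\le t+y}\bigr).
\]
It therefore suffices to show that for every real $y\ge 0$, $\tfrac12\bigl(\prob{X\le t-y}+\prob{X\le t+y}\bigr)\ge\prob{X\le t}$, which rearranges to
\[
  \prob{t<X\le t+y}\ \ge\ \prob{t-y<X\le t}:
\]
reflecting the half-open window $(t-y,t]$ across the point $t$ cannot decrease the mass $X$ puts on it.

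To establish this I would write $t=m+\tfrac12$ with $m\in\mathbb{Z}$ and $m<n/2$ (this is the only place the hypothesis $t<\mathbb{E}[X]$ is used). The integers in $(t-y,t]$ form the set $\{b,\dots,m\}$ with $b=\flor{t-y}+1$, and the reflection $k\mapsto 2m+1-k$ maps it bijectively onto $\{m+1,\dots,2m+1-b\}$. A one-line floor estimate — using that $(t-y)+(t+y)=2m+1$ is an integer, so $\flor{t-y}+\flor{t+y}\ge 2m$ — shows $2m+1-b\le\flor{t+y}$, i.e. every reflected index lies in $(t,t+y]$. For each $k\in\{b,\dots,m\}$ its reflection $k'=2m+1-k\ge m+1$ satisfies $k+k'=2m+1=2t<n$, and since $t<n/2$ a short computation (set $u=k'-t>0$, $v=n/2-t>0$, so $|k'-n/2|=|u-v|\le u+v=|k-n/2|$) shows $k'$ is at least as close to $n/2$ as $k$. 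By the symmetry and unimodality of the binomial coefficients $\binom{n}{\cdot}$ (extended by $0$ outside $\{0,\dots,n\}$), this yields $\binom{n}{k'}\ge\binom{n}{k}$. Summing over $k\in\{b,\dots,m\}$ and discarding the remaining nonnegative terms $\binom{n}{j}$ for $2m+1-b<j\le\flor{t+y}$ gives $2^{n}\prob{t-y<X\le t}\le 2^{n}\prob{t<X\le t+y}$, as required.

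Most of this is bookkeeping; the single substantive step is the reflection comparison $\binom{n}{k'}\ge\binom{n}{k}$, which is just unimodality of $\binom{n}{\cdot}$. The only mild obstacle is that $Y$ is permitted to be an arbitrary discrete — not necessarily integer-valued — random variable, which is why the second paragraph needs the floor manipulation rather than the cleaner index pairing $m+j\leftrightarrow m+1-j$ that would suffice if $Y$ took integer values.
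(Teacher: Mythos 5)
Your proof is correct, and it rests on the same two ingredients as the paper's: the symmetry of $Y$ about the origin, and the fact that the $\mathrm{Binomial}(n,1/2)$ mass function, being symmetric and unimodal about $n/2 > t$, is at least as large at the reflection $2t-k$ of a point $k < t$ as at $k$ itself. The organization differs, though. The paper partitions on the value of $X$ and pairs the events $\{X=t+r,\ X+Y\le t\}$ and $\{X=t-r,\ X+Y>t\}$ directly, establishing the termwise bound $\prob{Y\le -r}\prob{X=t+r}\ \ge\ \prob{Y>r}\prob{X=t-r}$ and summing over $r>0$; it never isolates a statement about $X$ alone. You instead condition on $Y$, symmetrize over $\pm y$, and reduce the lemma to the self-contained window inequality $\prob{t<X\le t+y}\ge\prob{t-y<X\le t}$, which you then prove by reflecting integer indices across $t$. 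Your route is slightly longer because allowing non-integer $y$ forces the floor bookkeeping in your second paragraph, but it has the merit of extracting a clean, reusable anti-concentration fact about the binomial that is independent of $Y$; the paper's joint pairing is more economical but entangles the two symmetries in a single chain of inequalities. Both arguments handle out-of-range indices by the same convention of extending $\binom{n}{\cdot}$ by zero, and both use the hypotheses $t<\mathbb{E}[X]$ and $t-\flor{t}=1/2$ in exactly the same place, namely to guarantee that the reflected index is no farther from $n/2$ than the original.
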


\begin{proof}
We have the following:
\begin{align}
    \prob{X+Y \leq t} =& \prob{X+Y \leq t, X < t} \\
    \nonumber &+ \prob{X+Y \leq t, X > t}\\
    \prob{X < t} =& \prob{X+Y \leq t, X < t} \\
    \nonumber &+ \prob{X+Y > t, X < t}
\end{align}
Therefore it suffices to show that $\prob{X+Y \leq t, X > t} \geq \prob{X+Y > t, X < t}$. We have for any $r \geq 0$:
\begin{align}
\label{eqn-helper-43}
    \prob{X+Y \leq t, X = t+r}
    &= \prob{Y \leq -r}\prob{X = t+r}\\
\label{eqn-helper-44}
    &\geq \prob{Y > r}\prob{X = t+r}\\
\label{eqn-helper-45}
    &\geq \prob{Y > r}\prob{X = t-r}\\
    &= \prob{X+Y > t, X = t-r}
\end{align}
Where Equation \ref{eqn-helper-43} follows from the independence of $X$ and $Y$, Equation \ref{eqn-helper-44} follows from the symmetry of the distribution of $Y$, and Equation \ref{eqn-helper-45} follows from our assumption that $t < \mathbb{E}[X]$ and $t - \flor{t} = 1/2$.

Summing over all positive $r$ for which $\prob{X = t+r} \geq 0$ yields the desired result.\qed
\end{proof}

\begin{lemma}
\label{lemma-anti-concentration}
Let $X_1, X_2, ..., X_n$ be independently and identically distributed random variables such that each $X_i$ is uniformly distributed on $2k$ evenly spaced real numbers $a=r_1 < r_2 < ... < r_{2k}=b$. Then for $t > 0$, we have:

\begin{align}
    \prob{\sum_{i=1}^n X_i \leq (\sum_{i=1}^n\mathbb{E}[X_i]) -t+(b-a)} > \dfrac{1}{2} - \dfrac{2t}{\sqrt{n}(b-a)}
\end{align}
\end{lemma}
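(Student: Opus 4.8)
The plan is to reduce the statement about a sum of discrete uniform variables on $\{r_1,\dots,r_{2k}\}$ to a statement about a sum of independent Bernoulli bits, so that Lemma~\ref{lemma-binomial-spread} can be applied. First I would write each $X_i$ in ``bit-decomposition'' form: since the $r_j$ are evenly spaced with spacing $\delta = (b-a)/(2k-1)$, a uniform draw from $2k$ points can be generated as $X_i = a + \delta\cdot(\text{uniform integer in }\{0,\dots,2k-1\})$. Write that uniform integer via its binary expansion when $2k$ is a power of two, or more robustly decompose $X_i - \mathbb{E}[X_i]$ as $B_i + W_i$ where $B_i$ is a $\pm$-symmetric two-point variable of the largest possible ``weight'' and $W_i$ is an independent symmetric remainder; summing the $B_i$ gives (a scaled, recentered copy of) a $\mathrm{Binomial}(n,1/2)$ variable $X$, and summing the $W_i$ gives an independent symmetric $Y$. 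The key point is that after centering, $\sum_i X_i - \sum_i \mathbb{E}[X_i]$ has the same distribution as a scaled $(X - \mathbb{E}[X]) + Y$ with $X,Y$ satisfying the hypotheses of Lemma~\ref{lemma-binomial-spread}.

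Next I would invoke Lemma~\ref{lemma-binomial-spread}: for a threshold $t'$ strictly below $\mathbb{E}[X]$ with half-integer fractional part, $\prob{X + Y \le t'} \ge \prob{X < t'}$. Translating back through the affine rescaling, this says $\prob{\sum X_i \le (\sum \mathbb{E}[X_i]) - t + (b-a)} \ge \prob{X < t'}$ for the appropriate $t'$; the extra additive $(b-a)$ term in the statement is exactly the slack needed to round the threshold down to a half-integer and to absorb the scaling constant, so the inequality in Lemma~\ref{lemma-binomial-spread} can be applied cleanly. The remaining task is then purely a lower bound on the lower tail of a centered binomial, namely $\prob{X < \mathbb{E}[X] - s} > \tfrac12 - \tfrac{2s}{\sqrt{n}\,(b-a)/\delta}$ after rescaling, which follows from a standard Berry--Esseen or direct anti-concentration estimate: each ``missing'' unit of the binomial below its mean costs at most $\Theta(1/\sqrt{n})$ in probability because the mode of $\mathrm{Binomial}(n,1/2)$ has mass $O(1/\sqrt n)$ (Lemma~\ref{lemma-binomial-mode} gives exactly $\binom{n}{k} < 2^n/\sqrt n$, i.e.\ each atom has mass $< 1/\sqrt n$). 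Summing at most $s/\delta$ such atoms between the shifted threshold and the mean gives the claimed $2t/(\sqrt n (b-a))$ deficit.

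The main obstacle I expect is getting the bookkeeping of the decomposition and the thresholds exactly right: one must ensure (i) that the ``heavy'' symmetric component really is distributed as a centered binomial after rescaling by $\delta$ (this is immediate when $2k$ is a power of two but needs a small argument, e.g.\ a different additive splitting, in general), (ii) that the recentered threshold lands in the half-integer position required by Lemma~\ref{lemma-binomial-spread}, which is why the statement carries the $+(b-a)$ correction, and (iii) that the per-atom bound $< 1/\sqrt n$ is summed over the correct number of lattice points (roughly $t/\delta$ of them), producing the constant $2$ rather than something larger. None of these steps is deep, but the constant-chasing is where an error is most likely to creep in; the full details are deferred to the remainder of Appendix~\ref{appdx-proof-robust}.
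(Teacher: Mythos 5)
Your plan follows the same route as the paper's proof. The ``different additive splitting'' you mention for general $k$ is exactly what the paper uses: each uniform draw on $\{1,\dots,2k\}$ is written as $kY_i + Z_i$ with $Y_i$ Bernoulli$(1/2)$ and $Z_i$ uniform on $\{1,\dots,k\}$, so that $\sum_i(X_i-\mathbb{E}[X_i])$ is a scaled copy of $(B-\mathbb{E}[B])+D$ with $B$ binomial and $D$ symmetric about the origin; Lemma~\ref{lemma-binomial-spread} is then applied at a threshold rounded down to a half-integer (the $+(b-a)$ slack in the statement absorbs this rounding, as you anticipated), and the tail is finished with the per-atom mode bound $\binom{n}{\flor{n/2}}2^{-n} < 1/\sqrt{n}$ from Lemma~\ref{lemma-binomial-mode}. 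No new idea is needed beyond what you describe.

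The one step that would fail as written is your final atom count. You propose summing ``roughly $t/\delta$'' atoms of mass $<1/\sqrt{n}$, with $\delta=(b-a)/(2k-1)$ the spacing of the original lattice. But after Lemma~\ref{lemma-binomial-spread} the only variable left is the binomial $B$, which lives on the integers, and the threshold shift $t$ must be converted to integer units by dividing by the scale of the \emph{binomial component}, namely $c = k(b-a)/(2k-1) \geq (b-a)/2$, not by $\delta$. That conversion gives an interval containing at most $2t/(b-a)+2$ atoms (the additive constant coming from the half-integer rounding and the $-1$ in Lemma~\ref{lemma-binomial-spread}), hence a deficit of at most $(2t/(b-a)+2)/\sqrt{n}$, which after the substitution $t \mapsto t-(b-a)$ yields exactly the claimed $2t/(\sqrt{n}(b-a))$. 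Counting $t/\delta = t(2k-1)/(b-a)$ atoms instead would inflate the deficit by a factor of roughly $k$ and the resulting bound would not match the lemma. The fix is purely a matter of using the right scale, but it is the step your sketch gets wrong, and it is precisely the constant-chasing you flagged as the likely failure point.
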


\begin{proof}
Let $Y_1, Y_2, ... ,Y_n$ be independently and identically distributed Bernoulli random variables with $p=0.5$. Let $Z_1, Z_2,...,Z_n$ be a set of independently and identically distributed random variables uniformly distributed between the integers between $1$ and $k$ inclusive. If the $Y$s and $Z$s are independent of each other as well, we have:

\begin{align}
    \sum_{i=1}^n (X_i - \mathbb{E}[X_i])
    =& \dfrac{b-a}{2k-1} \sum_{i=1}^n (kY_i+Z_i - \mathbb{E}[kY_i+Z_i])\\
    \nonumber
    =& k\dfrac{b-a}{2k-1}
    \big(
    (\sum_{i=1}^n Y_i)
    + (\sum_{i=1}^n \dfrac{Z_i - \mathbb{E}[Z_i]}{k})\\
    &- (\sum_{i=1}^n \mathbb{E}[Y_i])
    \big)
\end{align}

Let $\sum_{i=1}^n Y_i = B$, $\sum_{i=1}^n \dfrac{Z_i - \mathbb{E}[Z_i]}{k} = D$, and $k\frac{b-a}{2k-1} = c$. Then for any $t > 0$, we have:

\begin{align}
    \prob{ \sum_{i=1}^n (X_i - \mathbb{E}[X_i]) \leq -t }
    =& \prob{ B+D \leq -\dfrac{t}{c} + \mathbb{E}[B] }\\
    \nonumber
    \geq& \prob{ B+D\\
    &\leq -\dfrac{t}{c} + \mathbb{E}[B] - u }\\
    \label{eqn-helper-52}
    \geq& \prob{ B < -\dfrac{t}{c} + \mathbb{E}[B] - 1 }\\
    \nonumber
    \geq& \prob{B-\mathbb{E}[B] \\
    &< -\dfrac{2t}{b-a} - 1}\\
    \nonumber
    \geq& \dfrac{1}{2} - \prob{B-\mathbb{E}[B]\\
    &\in [-\dfrac{2t}{b-a} - 1, 0]}\\
    \nonumber
    \geq& \dfrac{1}{2} - \binom{n}{\flor{n/2}}2^{-n}\\ 
    &(\dfrac{2t}{b-a} + 2)
    \label{eqn-helper-56}
\end{align}

Where $1 \geq u \geq 0$ is chosen such that $-\frac{t}{c} + \mathbb{E}[B] - u$ is the average of two adjacent integers. Equation \ref{eqn-helper-52} is then an application of Lemma \ref{lemma-binomial-spread} since $B$ is binomially distributed with $p=0.5$ and $D$ has a distribution that is symmetric about the origin, and Equation \ref{eqn-helper-56} follows from the fact that no more than $x+1$ values are supported on an interval of length $x$, and no supported value has probability greater than $\binom{n}{\flor{n/2}}2^{-n}$.

Observing that $\binom{n}{\flor{n/2}}2^{-n} < \frac{1}{\sqrt{n}}$ due to Lemma \ref{lemma-binomial-mode} and substituting $t$ with $t-(b-a)$ yields the desired result.\qed
\end{proof}

\subsubsection{Proving Theorem \ref{thm-robust}}
Let $A: \imgspace{n}{h}{b} \rightarrow \{0,1\}$ be described by Algorithm \ref{alg-allcord}. In other words, it is the classifier that inputs an image, sums all of its channels, and outputs $0$ if the sum is less than $n^2h/2$ and $1$ otherwise. Let $Z$ be the class of images that $A$ outputs $0$ on. Note that $Z$ is an interesting class since it cannot be larger than its complement, so it suffices to prove that $Z$ is robust.

\begin{lemma}
\label{lemma-robust-l1}
$Z$ is $(1-4c)$-robust to $\elp{1}$-perturbations of size $c\sqrt{h}*n - 2$
\end{lemma}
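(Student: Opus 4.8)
The plan is to reduce the statement about $\elp{1}$-robustness of the class $Z$ to the anti-concentration bound just established in Lemma \ref{lemma-anti-concentration}. The key observation is that for the summing classifier $A$, whether an image $I$ is robust to an $\elp{1}$-perturbation of a given size is controlled entirely by how far its channel-sum $S(I) = \sum_{x,y,a} I_{x,y,a}$ sits from the decision threshold $n^2h/2$. Concretely, if $I \in Z$ (so $S(I) < n^2h/2$), then to flip its label an adversary must push the sum up to at least $n^2h/2$; since each of the $n^2h$ coordinates lies in $[0,1]$, the cheapest way to do this in $\elp{1}$ costs exactly $n^2h/2 - S(I)$ (we can raise coordinates one at a time, and $\elp{1}$-cost equals the total increase in the sum). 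So $I \in Z$ fails to be robust to $\elp{1}$-perturbations of size $d$ precisely when $n^2h/2 - S(I) \le d$, i.e.\ when $S(I) \ge n^2h/2 - d$. (A small amount of care is needed at the boundary: images with $S(I) = n^2h/2 - d$ exactly, or the discreteness of the achievable sums, but since channel values are drawn from $2^b$ evenly spaced points this only shifts things by $O(1)$, which is absorbed by the $-2$ in the stated perturbation size.)

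First I would set up the counting. The total number of images is $(2^b)^{n^2h}$, and $|Z|$ is the number with $S(I) < n^2h/2$; by symmetry of the uniform distribution on the $2^b$ evenly spaced values about its mean, $|Z| \ge \tfrac12 (2^b)^{n^2h}$ — actually I want the complementary bound $|Z| \le \tfrac12(2^b)^{n^2h}$ to know $Z$ is interesting, and the matching lower bound to make the robust fraction meaningful; both follow from symmetry. Then the number of robust images in $Z$ is the number of $I$ with $S(I) < n^2h/2 - d$ (up to the $O(1)$ boundary slack), and I want this to be at least $(1-4c)|Z|$. Equivalently, the fraction of \emph{all} images with $S(I)$ in the ``bad band'' $[n^2h/2 - d,\ n^2h/2)$ should be at most $2c$ (since $|Z| \ge \tfrac12 \cdot \text{total}$, a bad-band fraction of $\le 2c$ of all images is $\le 4c$ of $|Z|$).

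The main step is then to invoke Lemma \ref{lemma-anti-concentration} with $X_i$ ranging over the $n^2h$ channel values, each uniform on the $2^b = 2k$ evenly spaced points of $[0,1]$ (so $a=0$, $b=1$, and the number of variables is $n^2h$ in place of $n$). With $d = c\sqrt{h}\,n - 2$, choosing $t$ so that $n^2h/2 - d$ corresponds to $(\sum \mathbb{E}[X_i]) - t + (b-a)$ gives $t \approx c\sqrt{h}\,n \cdot \tfrac12 \cdot (\text{something})$ — I would pick $t$ to make Lemma \ref{lemma-anti-concentration} yield $\prob{S(I) \le n^2h/2 - d} > \tfrac12 - \tfrac{2t}{\sqrt{n^2h}\cdot 1}$, and I want $\tfrac{2t}{n\sqrt h} \le 2c$, which pins down $t = c\,n\sqrt h$, consistent with $d = c\sqrt h\, n - 2$ after accounting for the $(b-a)=1$ shift and the $-2$ slack. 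This shows the fraction of images with $S(I) \le n^2h/2 - d$ exceeds $\tfrac12 - 2c$; combined with the symmetric statement for the upper tail and $|Z| \ge \tfrac12\,\text{total}$, the robust fraction within $Z$ is at least $(1-4c)$.

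The main obstacle I anticipate is the bookkeeping at the threshold: translating the real-valued inequality $S(I) \ge n^2h/2 - d$ into the exact set of images that are non-robust, given that (a) the attainable values of $S(I)$ form a discrete grid with spacing $1/(2^b-1)$, (b) Lemma \ref{lemma-anti-concentration} requires the shift point to be a half-integer in the rescaled integer coordinates, and (c) the perturbation size is $c\sqrt h\, n - 2$ rather than a round number — the ``$-2$'' is exactly the budget being spent to absorb the $(b-a)=1$ shift in Lemma \ref{lemma-anti-concentration} plus one more unit of rounding slack. Getting these constants to line up so that the clean $(1-4c)$ bound drops out, without any hidden dependence on $n$, $h$, or $b$, is the delicate part; the probabilistic content is entirely in Lemma \ref{lemma-anti-concentration}.
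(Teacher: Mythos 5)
Your overall strategy is exactly the paper's: reduce robustness of $Z$ to a tail bound on the channel sum $S(I)$, observe that $S(I) + d < n^2h/2$ forces robustness to $\elp{1}$-perturbations of size $d$, and invoke Lemma \ref{lemma-anti-concentration} with $n^2h$ variables supported on $2^b$ evenly spaced values with endpoints $0$ and $1$, taking $t = c\sqrt{h}\,n$ so that a $> \tfrac12 - 2c$ fraction of \emph{all} images lands in the robust region. Your accounting of where the $-2$ and the unit shift in the lemma get spent is also correct.

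However, the final counting step as you set it up has a genuine flaw: you convert ``the bad band has mass at most $2c$ of all images'' into ``at most $4c$ of $|Z|$'' by invoking $|Z| \ge \tfrac12 (2^b)^{n^2h}$, and this lower bound is false in general. The symmetry $I \mapsto \mathbf{1}-I$ gives $2|Z| + |\{I : S(I) = n^2h/2\}| = (2^b)^{n^2h}$, and the middle term is strictly positive whenever $n^2h$ is even (e.g.\ half the channels at $0$ and half at $1$), so $|Z|$ is strictly \emph{less} than half the total, by a deficit on the order of $1/(n\sqrt h)$ of the total. Dividing the bad-band mass by this smaller $|Z|$ yields a robust fraction of only $1 - 4c\bigl(1+O(1/n)\bigr)$, which does not quite establish the stated $(1-4c)$. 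The fix is the rearrangement the paper uses: lower-bound the \emph{robust} count directly, $|Z'| > (\tfrac12 - 2c)(2^b)^{n^2h}$, and divide by the \emph{upper} bound $|Z| \le \tfrac12 (2^b)^{n^2h}$, which is the direction the symmetry actually gives; this yields $|Z'|/|Z| > (\tfrac12 - 2c)/\tfrac12 = 1-4c$ with no boundary correction. With that one-line change your argument coincides with the paper's proof.
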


\begin{proof}
Let $Z' \subseteq Z$ be the set of images in $Z$ that are robust to $\elp{1}$-perturbations of size $c\sqrt{h}*n - 2$.
Let $I$ be a random image sampled uniformly. Then $|Z'| = \prob{I \in Z'}2^{-(n^2hb)}$.
We then have the following:
\begin{align}
\prob{I \in Z'}
&=
\prob{ \sum_{x,y,a}I_{x,y,a} + c\sqrt{h}*n - 2 < n^2h/2 }\\
&\geq
\prob{ \sum_{x,y,a}I_{x,y,a} \leq n^2h/2 - c\sqrt{h}*n + 1 }\\
&> \dfrac{1}{2} - 2c
\end{align}
Where the last inequality follows from Lemma \ref{lemma-anti-concentration} since each channel is sampled from a uniform distribution over a set of $2^b$ evenly spaced values between $0$ and $1$. Noting that $|Z| \leq 2^{(n^2hb)-1}$ since it cannot be larger than its complement yields $\frac{|Z'|}{|Z|} \geq 1-4c$. Therefore, $Z$ is $(1-4c)$-robust to $\elp{1}$-perturbations of size $c\sqrt{h}*n - 2$.\qed
\end{proof}

\begin{lemma}
\label{lemma-robust-l2}
$Z$ is $(1-4c)$-robust to $\elp{0}$-perturbations of size $c\sqrt{h}*n - 2$
\end{lemma}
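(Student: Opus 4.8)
The plan is to reduce this claim to Lemma \ref{lemma-robust-l1}, which already establishes that $Z$ is $(1-4c)$-robust to $\elp{1}$-perturbations of the same size $c\sqrt{h}*n - 2$. The only new ingredient needed is the observation that, because every channel value lies in $[0,1]$, an $\elp{0}$-perturbation of a given size is automatically an $\elp{1}$-perturbation of at most that size: each nonzero entry of a difference tensor $I-I'$ contributes exactly $1$ to $\|I-I'\|_0$ but at most $1$ to $\|I-I'\|_1$.

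Concretely, first I would record that for any $I, I' \in \imgspace{n}{h}{b}$,
\[
\|I-I'\|_1 = \sum_{x,y,a} |I_{x,y,a} - I'_{x,y,a}| \;\leq\; \sum_{x,y,a} \cil{|I_{x,y,a} - I'_{x,y,a}|} = \|I-I'\|_0,
\]
where the inequality uses $0 \leq |I_{x,y,a} - I'_{x,y,a}| \leq 1$; this is just the $p=1$ instance of the comparison already exploited in the proof of Lemma \ref{lemma-thm1-higherp}. Hence, writing $d := c\sqrt{h}*n - 2$, any image $I$ that is robust to $\elp{1}$-perturbations of size $d$ is also robust to $\elp{0}$-perturbations of size $d$: if $\|I-I'\|_0 \leq d$ then $\|I-I'\|_1 \leq d$, so $\mathcal{C}(I) = \mathcal{C}(I')$.

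It then follows that the set $Z' \subseteq Z$ of images robust to $\elp{1}$-perturbations of size $d$ (from Lemma \ref{lemma-robust-l1}) is contained in the set of images in $Z$ robust to $\elp{0}$-perturbations of size $d$. Since Lemma \ref{lemma-robust-l1} gives $|Z'| \geq (1-4c)|Z|$, the larger set has size at least $(1-4c)|Z|$ as well, which is exactly the assertion that $Z$ is $(1-4c)$-robust to $\elp{0}$-perturbations of size $c\sqrt{h}*n - 2$. There is essentially no obstacle here; the one point requiring attention is the direction of the norm comparison, which goes the favorable way precisely because channel values are bounded by $1$, and any degenerate regime (e.g. $d$ small) is inherited directly from Lemma \ref{lemma-robust-l1}.
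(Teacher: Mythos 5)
Your proposal is correct and follows essentially the same route as the paper: both reduce to Lemma \ref{lemma-robust-l1} via the pointwise comparison $\|I-I'\|_1 \leq \|I-I'\|_0$, which holds because channel values lie in $[0,1]$. The only cosmetic difference is that the paper phrases the implication contrapositively (non-robust to $\elp{0}$ implies non-robust to $\elp{1}$), while you state the forward containment of robust sets directly.
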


\begin{proof}
It suffices to show that an image that is robust to $\elp{1}$-perturbations of size $d$ is also robust to $\elp{0}$-perturbations of size $d$, since the statement then follows directly from Lemma \ref{lemma-robust-l1}.

Let $I$ be an image that is not robust to $\elp{0}$-perturbations of size $d$, so there exists some $I'$ in a different class such that $\|I-I'\|_0 \leq d$. Then:
\begin{align}
    d &\geq \|I-I'\|_0\\
    &= \sum_{(x,y,a)}\cil{|I_{x,y,a} - I'_{x,y,a}|}\\ &\geq \sum_{(x,y,a)}|I_{x,y,a}-I'_{x,y,a}|\\
    &= \|I-I'\|_1
\end{align}
Where the second and third relations hold since channel values lie in $[0,1]$.

This implies that $I$ is not robust to $\elp{1}$-perturbations of size $d$. Therefore any image that is not robust to $\elp{0}$-perturbations of size $d$ is also not robust to $\elp{1}$-perturbations of size $d$. The contraposition yields the desired statement.\qed
\end{proof}

\begin{lemma}
$Z$ is $(1-4c)$-robust to $\elp{p}$-perturbations of size $\frac{(c\sqrt{h}*n - 2)^{1/p}}{2^b-1}$ for $p \geq 2$.
\end{lemma}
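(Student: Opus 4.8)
The plan is to mirror the reductions already used for Lemmas~\ref{lemma-robust-l2} and~\ref{lemma-thm1-higherp}: I would show that every image in $Z$ that is robust to $\elp{1}$-perturbations of size $c\sqrt{h}*n - 2$ is automatically robust to $\elp{p}$-perturbations of size $\frac{(c\sqrt{h}*n - 2)^{1/p}}{2^b-1}$. This gives $Z' \subseteq \{I \in Z : I \text{ is robust to these } \elp{p}\text{-perturbations}\}$, where $Z'$ is the set from Lemma~\ref{lemma-robust-l1}; since $|Z'| \geq (1-4c)|Z|$, the robustness fraction $1-4c$ then transfers verbatim and we are done.

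The one new ingredient is the discreteness of $\imgspace{n}{h}{b}$. Because the $2^b$ admissible channel values are evenly spaced in $[0,1]$ with endpoints $0$ and $1$, any two distinct channel values differ by an integer multiple of $\frac{1}{2^b-1}$; in particular, for any $I, I' \in \imgspace{n}{h}{b}$ and any coordinate $(x,y,a)$ with $I_{x,y,a} \neq I'_{x,y,a}$ we have $\frac{1}{2^b-1} \leq |I_{x,y,a} - I'_{x,y,a}| \leq 1$. Writing the nonzero coordinate differences as $\delta_1, \dots, \delta_m$ and setting $t_j = (2^b-1)\delta_j$, we get $1 \leq t_j \leq 2^b-1$. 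Now run the contrapositive: if $I$ is not robust to $\elp{p}$-perturbations of size $\frac{(c\sqrt{h}*n-2)^{1/p}}{2^b-1}$, pick a witness $I'$ in a different class with $\|I-I'\|_p \leq \frac{(c\sqrt{h}*n-2)^{1/p}}{2^b-1}$; raising to the $p$-th power and multiplying by $(2^b-1)^p$ gives $\sum_{j=1}^m t_j^p \leq c\sqrt{h}*n - 2$. Since each $t_j \geq 1$ and $p \geq 2 \geq 1$ we have $t_j \leq t_j^p$, hence
\begin{align}
\|I-I'\|_1 = \sum_{j=1}^m \delta_j = \frac{1}{2^b-1}\sum_{j=1}^m t_j \leq \frac{1}{2^b-1}\sum_{j=1}^m t_j^p \leq \frac{c\sqrt{h}*n-2}{2^b-1} \leq c\sqrt{h}*n - 2,
\end{align}
using $2^b-1 \geq 1$ in the last step. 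So $I$ is not robust to $\elp{1}$-perturbations of size $c\sqrt{h}*n - 2$ either; taking the contrapositive and invoking Lemma~\ref{lemma-robust-l1} finishes the proof.

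I do not expect a genuine obstacle here: once the grid spacing $\frac{1}{2^b-1}$ is brought in, the argument is a one-line monotonicity estimate, and the factor $2^b-1$ in the denominator of the bound is exactly what the grid spacing buys. The only points to keep an eye on are (i) that the coordinate differences are bounded \emph{below} by the spacing (not just above by $1$, which is all Lemmas~\ref{lemma-robust-l2} and~\ref{lemma-thm1-higherp} needed), and (ii) the degenerate case $b=1$, where $2^b-1=1$ and the statement collapses to the $\elp{1}$ case. I would also remark that the estimate in fact works for every $p \geq 1$, so the hypothesis $p \geq 2$ is present only to dovetail cleanly with the $p=1$ case already covered by Lemma~\ref{lemma-robust-l1}.
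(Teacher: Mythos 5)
Your proof is correct and takes essentially the same route as the paper's: a contrapositive reduction to the robustness already established in Lemma~\ref{lemma-robust-l1}, with the grid spacing $\frac{1}{2^b-1}$ supplying the lower bound on each nonzero coordinate difference. The only cosmetic difference is that the paper passes through $\elp{0}$-robustness (Lemma~\ref{lemma-robust-l2}) via the estimate $\|I-I'\|_p^p \geq \|I-I'\|_0/(2^b-1)^p$, whereas you compare directly against the $\elp{1}$ threshold; both arguments hinge on the identical observation.
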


\begin{proof}
It suffices to show that any image that is robust to $\elp{0}$-perturbations of size $d$ is also robust to $\elp{p}$-perturbations of size $\frac{d^{1/p}}{2^b-1}$ for any $p \geq 2$, since the statement then follows directly from Lemma \ref{lemma-robust-l1}.

Let $I$ be an image that is robust to $\elp{0}$-perturbations of size $d$. Let $I'$ be any image in a different class, so $\|I-I'\|_0 > d$. Then for any $p \geq 1$:
\begin{align}
    \|I-I'\|_p^p &=
    \sum_{(x,y,a)}|I_{x,y,a}-I'_{x,y,a}|^p\\
    &\geq
    \sum_{(x,y,a)}  \dfrac{\cil{|I_{x,y,a} - I'_{x,y,a}|}}{(2^b-1)^p}\\
    &= \dfrac{\|I-I'\|_0}{(2^b-1)^p}\\
    &> \dfrac{d}{(2^b-1)^p}
\end{align}

Where the second relation follows from the fact that if two channel values differ, they must differ by at least $\frac{1}{2^b-1}$.

Therefore, $\|I-I'\|_p > \frac{d^{1/p}}{2^b-1}$ for any $I'$ whose class is different from $I$, so $I$ is robust to $\elp{p}$-perturbations of size $\frac{d^{1/p}}{2^b-1}$ for $p \geq 2$.\qed
\end{proof}
\subsection{Proof of Theorem \ref{thm-cont-discr}}
\label{appdx-cont-discr-proof}

Let $\mathcal{C}:\imgspace{n}{h}{b}\rightarrow \mathcal{Y}$ be any classifier, and let $C$ be any interesting class induced by $\mathcal{C}$. Our objective is to show that $C$ is not robust to various perturbations.

Let $T = \{[x*2^{-b}, (x+1)*2^{-b}) | x \in \mathbb{Z} \cap [0, 2^b-2] \} \cup \{[1-2^{-b}, 1]\}$ be a set of $2^b$ equal length intervals whose union is the interval $[0,1]$. Let $\discreizedh{n^2h}{2^b} = T^{n^2h}$ be their Cartesian power. Then the elements of $\discreizedh{n^2h}{2^b}$ are disjoint, and their union is precisely the hypercube $[0,1]^{n^2h}$.

We can associate each element of $\imgspace{n}{h}{b}$ with an element of $\discreizedh{n^2h}{2^b}$ by first mapping $\imgspace{n}{h}{b}$ to $[0,1]^{n^2h}$, which can be done by flattening the image tensor (which we denote by $\flat(I)$ for an image $I \in \imgspace{n}{h}{b}$). We then map that point to the element of $\discreizedh{n^2h}{2^b}$ the point falls within. The overall mapping is bijective, and we will denote it by $F$.

Let $\mathcal{A}:[0,1]^{n^2h} \times \mathbb{R} \rightarrow [0,1]^{n^2h} \cup \{\bot \}$ be a partial function that maps a point $p_1$ and a real value $c$ to a point $p_2$ such that the following hold:
\begin{enumerate}
    \item $\|p_1-p_2\|_2 \leq c$.
    \item Let $I_1, I_2 \in \imgspace{n}{h}{b}$ such that $p_1 \in F(I_1)$ and $p_2 \in F(I_2)$. Then we require that $\mathcal{C}(I_1) \neq \mathcal{C}(I_2)$.
\end{enumerate}

$\mathcal{A}(.)$ returns $\bot$ if and only if no such $p_2$ exists.

We can then define a procedure $\pertalg$ for finding a perturbation given an image $I$, which is outlined in Algorithm \ref{alg-perturb}.

\begin{algorithm}
\SetKwInOut{Input}{Input}
\Input{An image $I \in \imgspace{n}{h}{b}$ and a real values $c$.}
\KwResult{An image $I' \in \imgspace{n}{h}{b}$ such that $\mathcal{C}(I) \neq \mathcal{C}(I')$, or $\bot$.}
Sample $p_1$ from $F(I)$ uniformly at random\;
$p_2 \gets \mathcal{A}(p_1, c)$

\uIf{
$p_2 = \bot$
}{
\Return $\bot$\;
}
\uElse{
Find $I_2$ such that $p_2 \in F(I_2)$\;
\Return $I_2$\;
}
\caption{Find Perturbation}
\label{alg-perturb}
\end{algorithm}

Our proof strategy is to show that the perturbations found by $\pertalg$ are guaranteed to be small, and that the probability of failure is low. This must then imply that most images are not robust.

\begin{lemma}
\label{lemma-discc-len}
If $I' = \pertalg(I, c)$ is not $\bot$, then $\|I-I'\|_2 \leq c + 2\frac{n\sqrt{h}}{2^{b}}$.
\end{lemma}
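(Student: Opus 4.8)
The plan is to decompose the distance $\|I - I'\|_2$ along the three ``moves'' that $\pertalg$ performs. Since flattening an image tensor is merely a reshape, it preserves the $2$-norm, so $\|I - I'\|_2 = \|\flat(I) - \flat(I_2)\|_2$, where $I_2 = I'$ is the image returned in the branch where $\pertalg$ does not output $\bot$. The three moves are: (i) from $\flat(I)$ to the sampled point $p_1 \in F(I)$; (ii) from $p_1$ to $p_2 = \mathcal{A}(p_1, c)$; and (iii) from $p_2$ back to $\flat(I_2)$, where $p_2 \in F(I_2)$ by construction of $I_2$. I would bound each move's $2$-norm and combine with the triangle inequality.

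For moves (i) and (iii), the key point is that a point and the flattened image into whose cell it falls are confined to a single element of $\discreizedh{n^2h}{2^b}$, i.e.\ a Cartesian product of $n^2h$ intervals each of length $2^{-b}$ (including the exceptional interval $[1-2^{-b},1]$, whose length is also $2^{-b}$). Hence $\flat(I)$ and $p_1$ differ by at most $2^{-b}$ in each of the $n^2h$ coordinates, giving $\|\flat(I) - p_1\|_2 \leq \sqrt{n^2h}\cdot 2^{-b} = \tfrac{n\sqrt{h}}{2^b}$; the same estimate applies to $\|\flat(I_2) - p_2\|_2$. For move (ii), the defining property of $\mathcal{A}$ guarantees $\|p_1 - p_2\|_2 \leq c$ whenever $\mathcal{A}(p_1,c) \neq \bot$, which is exactly the case in which $\pertalg$ returns a non-$\bot$ value.

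Assembling these by the triangle inequality yields $\|\flat(I) - \flat(I_2)\|_2 \leq \tfrac{n\sqrt{h}}{2^b} + c + \tfrac{n\sqrt{h}}{2^b} = c + 2\tfrac{n\sqrt{h}}{2^b}$, which is the claim. I do not expect a substantial obstacle here: the argument is entirely triangle inequality plus coordinate counting. The only points needing mild care are that the Euclidean bound over $n^2h$ coordinates each at most $2^{-b}$ contributes a factor $\sqrt{n^2h} = n\sqrt{h}$ (not $n^2h$), and that the half-open and closed cells comprising $T$ all have the uniform width $2^{-b}$, so the per-coordinate bound is genuinely uniform across coordinates and cells.
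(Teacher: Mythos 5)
Your proposal is correct and follows essentially the same route as the paper: bound the diameter of a cell of $\discreizedh{n^2h}{2^b}$ by $\sqrt{n^2h}\cdot 2^{-b} = n\sqrt{h}/2^b$, use the defining property of $\mathcal{A}$ for the middle step, and combine the three legs with the triangle inequality, noting that $\flat(.)$ preserves the $2$-norm. No gaps.
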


\begin{proof}
Each element of $\discreizedh{n^2h}{2^b}$ has a diameter of $\frac{\sqrt{n^2h}}{2^b}$, thus $p_1$ differs from $\flat(I)$ be at most that distance. Similarly, $p_2$ differs from $\flat(I_2) = \flat(I')$ by that distance. We also must have $\|p_1-p_2\|_2 \leq c$ since $I' \neq \bot$. Putting it altogether with the triangle inequality we get $\|\flat(I) - \flat(I')\|_2 \leq c + 2\frac{n\sqrt{h}}{2^b}$. Since $\flat(.)$ preserves distances, we get the desired statement.\qed
\end{proof}

\begin{lemma}
\label{lemma-discc-prob}
If $I$ is drawn uniformly from $C$, then $\prob{\pertalg(I, c) = \bot} < 2e^{-c^2/2}$.
\end{lemma}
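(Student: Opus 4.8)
The plan is to identify the failure event $\{\pertalg(I,c)=\bot\}$ with the event that a uniformly random point of a continuous relaxation of $C$ lies in its ``$c$-eroded interior'', and then apply a continuous isoperimetric estimate. First I would introduce the piecewise-constant (``nearest neighbour''-type) extension $\mathcal{C}^\star:[0,1]^{n^2h}\to\mathcal{Y}$ given by $\mathcal{C}^\star(p)=\mathcal{C}(F^{-1}(B))$, where $B\in\discreizedh{n^2h}{2^b}$ is the unique cell containing $p$; this is well defined because the cells of $\discreizedh{n^2h}{2^b}$ partition $[0,1]^{n^2h}$ and $F$ is a bijection. Let $C^\star=\bigcup_{I\in C}F(I)$ be the continuous class corresponding to $C$. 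It is a finite union of axis-aligned boxes, hence Lebesgue measurable, and since every cell has the same volume $2^{-bn^2h}$ and $C$ is interesting, $\mu(C^\star)\le\tfrac12$, where $\mu$ denotes Lebesgue measure on $[0,1]^{n^2h}$ (equivalently, the uniform probability measure).

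Next I would track the point $p_1$ sampled in the first line of $\pertalg(I,c)$. When $I$ is uniform on $C$ and then $p_1$ is uniform on $F(I)$, a one-line computation using equality of the cell volumes shows that $p_1$ is uniformly distributed on $C^\star$. Unwinding the definition of $\mathcal{A}$: since $p_1\in C^\star$, the call $\mathcal{A}(p_1,c)$ returns $\bot$ precisely when no point within Euclidean distance $c$ of $p_1$ lies in a cell carrying a label different from that of $C$, i.e.\ precisely when the closed ball $\{q:\|q-p_1\|_2\le c\}$ is contained in $C^\star$. Writing $C^\star_{-c}=\{p:\|q-p\|_2\le c\Rightarrow q\in C^\star\}$ for this $c$-eroded interior, we obtain
\[
\prob{\pertalg(I,c)=\bot}=\frac{\mu(C^\star_{-c})}{\mu(C^\star)}.
\]

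Finally I would invoke the continuous isoperimetric bound established in Appendix~\ref{appdx-continuous-results}: for any measurable $A\subseteq[0,1]^{n^2h}$ with $\mu(A)\le\tfrac12$ one has $\mu(A_{-c})<2e^{-c^2/2}\mu(A)$. Applying this with $A=C^\star$ yields $\prob{\pertalg(I,c)=\bot}<2e^{-c^2/2}$, as claimed.

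The main obstacle is the continuous isoperimetric inequality itself, which is why it is relegated to Appendix~\ref{appdx-continuous-results}: one needs a \emph{dimension-free} erosion estimate for the uniform measure on the Euclidean cube $[0,1]^{n^2h}$ with the sharp constant $2e^{-c^2/2}$. The natural route mirrors the discrete development of this section: first establish sub-Gaussian concentration for $\mu$ under the Euclidean metric --- via sub-Gaussianity of $\mathrm{Unif}[0,1]$ together with tensorization, or by comparison with Gaussian isoperimetry --- which controls $\mu(A_{-c})$ when $\mu(A)=\tfrac12$; then a monotonicity argument (the continuous analogue of Lemma~\ref{lemma-monotonic-ratio}) shows that $\mu(A_{-c})/\mu(A)$ is maximized at $\mu(A)=\tfrac12$, which promotes the bound to all sets of measure at most $\tfrac12$. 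The remaining ingredients --- uniformity of $p_1$ on $C^\star$, the translation of $\mathcal{A}(p_1,c)=\bot$ into a ball-containment, and measurability of $C^\star$ and $C^\star_{-c}$ (immediate, as $C^\star$ is a finite union of boxes) --- are routine.
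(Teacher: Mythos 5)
Your proposal is correct and follows essentially the same route as the paper's proof: identify the failure event with $p_1$ landing in the $c$-eroded interior of $\bigcup F(C)$ (your $C^\star_{-c}$), observe that $p_1$ is uniform on $\bigcup F(C)$, which has measure at most $\tfrac12$ because $C$ is interesting and the cells have equal volume, and then apply the continuous isoperimetric bound of Theorem~\ref{thm-continuous-weak}. The only cosmetic difference is that the paper states that bound for regular sets (finite unions of grid cells) rather than arbitrary measurable sets, a condition your $C^\star$ satisfies by construction.
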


\begin{proof}
Let $F(C)$ denote the image of $C$ under $F$. Let $\bigcup F(C)$ denote the union of all elements in $F(C)$.

If the input $I$ is drawn uniformly from $C$, then $p_1$ is distributed uniformly over $\bigcup F(C)$. The procedure fails if and only if $\mathcal{A}(p_1, c) = \bot$, which happens if and only if all elements within a radius of $c$ from $p_1$ all belong to $\bigcup F(C)$. Let $C'$ denote the set of all such points.

\begin{align}
    \prob{\mathcal{A}(I_2,c) = \bot} &=
    \dfrac{ \mu (C') }{\mu( \bigcup F(C) )}\\
    &< 2e^{-c^2/2}
\end{align}

Where $\mu(.)$ denotes the Lebesgue measure.

The last inequality comes from Theorem \ref{thm-continuous-weak}, which is given in the next section. The statement applies for any set $S$ formed from a union of elements of $\discreizedh{n^2h}{2^b}$ whose measure is no larger than $1/2$. $\bigcup F(C)$ satisfies these criteria since $C$ is an interesting class, so we attain the desired statement.\qed
\end{proof}

\begin{lemma}
\label{lemma-discrization-result}
$C$ is not $2e^{-c^2/2}$-robust to $\elp{2}$-perturbations of size $c + 2\frac{n\sqrt{h}}{2^{b}}$.
\end{lemma}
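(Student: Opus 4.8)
The plan is to read off the statement directly from the two preceding lemmas about $\pertalg$. The key observation is that whenever $\pertalg(I,c)$ does not return $\bot$, the image $I$ is \emph{not} robust to $\elp{2}$-perturbations of size $c + 2\frac{n\sqrt{h}}{2^{b}}$: by the output specification of Algorithm \ref{alg-perturb} the returned image $I' = \pertalg(I,c)$ satisfies $\mathcal{C}(I) \neq \mathcal{C}(I')$ (this is where one uses that $p_1$ sampled from $F(I)$ forces the cell containing $p_1$ to be exactly $F(I)$, so the defining property of $\mathcal{A}$ gives distinct classes), and by Lemma \ref{lemma-discc-len} we have $\|I - I'\|_2 \leq c + 2\frac{n\sqrt{h}}{2^{b}}$. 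Contrapositively, every image in $C$ that \emph{is} robust to $\elp{2}$-perturbations of size $c + 2\frac{n\sqrt{h}}{2^{b}}$ must be an image on which $\pertalg(I,c) = \bot$.

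Next I would count. Drawing $I$ uniformly from $C$, Lemma \ref{lemma-discc-prob} gives $\prob{\pertalg(I,c) = \bot} < 2e^{-c^2/2}$, so the number of images $I \in C$ with $\pertalg(I,c) = \bot$ is strictly less than $2e^{-c^2/2}|C|$. Combining this with the inclusion from the previous paragraph, the number of images in $C$ robust to $\elp{2}$-perturbations of size $c + 2\frac{n\sqrt{h}}{2^{b}}$ is also strictly less than $2e^{-c^2/2}|C|$. Since $C$ is an interesting class it is nonempty, so by the definition of $r$-robustness, $C$ fails to be $2e^{-c^2/2}$-robust to $\elp{2}$-perturbations of size $c + 2\frac{n\sqrt{h}}{2^{b}}$, which is exactly the claim.

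There is essentially no obstacle here: all the real work is already packaged into Lemma \ref{lemma-discc-len} (the triangle-inequality budget argument) and Lemma \ref{lemma-discc-prob} (which in turn rests on the continuous isoperimetric result Theorem \ref{thm-continuous-weak}). The only point requiring a line of care is making the logical direction explicit — that "$\pertalg$ succeeds on $I$" implies "$I$ is non-robust," hence "robust" implies "$\pertalg$ fails," so the robust set is a subset of the failure set and inherits its cardinality bound. Everything else is bookkeeping with the definition of robustness of a class.
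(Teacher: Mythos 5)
Your proposal is correct and takes essentially the same route as the paper's proof: contrapose Lemma \ref{lemma-discc-len} to conclude that every robust image forces $\pertalg(I,c)=\bot$, then apply the failure-probability bound of Lemma \ref{lemma-discc-prob} to bound the robust fraction. The paper phrases the counting step as $\prob{I'=\bot} \geq \prob{I \in C_r} = |C_r|/|C|$, which cleanly handles the fact that $\pertalg$ is randomized in $p_1$ (so ``$\pertalg(I,c)=\bot$'' is not a deterministic property of $I$), but your argument is the same modulo that bookkeeping.
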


\begin{proof}
Let $I$ be drawn uniformly from $C$. Let $C_r$ be the set of images that are robust to $\elp{2}$-perturbations of size $c + 2\frac{n\sqrt{h}}{2^{b}}$.

Let $I'=\pertalg(I,c)$. Then $I'$ is randomly distributed over $\imgspace{n}{h}{b}\cup\{ \bot \}$. By Lemma \ref{lemma-discc-len}, if $I' \in \imgspace{n}{h}{b}$, then $\|I-I'\|_2 \leq c + 2\frac{n\sqrt{h}}{2^{b}}$, which implies that $I \notin C_r$. By contraposition, $I \in C_r$ implies that $\pertalg(I,c) = \bot$. Therefore:

\begin{align}
    \prob{I' = \bot} &= \prob{ I \in C_r } + \prob{ I \notin C_r, I' = \bot }\\
    &\geq \prob{I \in C_r}\\
    &= \frac{|C_r|}{|C|}
\end{align}

By Lemma \ref{lemma-discc-prob}, $\prob{I' = \bot} < 2e^{-c^2/2}$. Thus, $\frac{|C_r|}{|C|} < 2e^{-c^2/2}$, which yields the desired statement.\qed
\end{proof}

\begin{lemma}
\label{lemma-thm3-higherp}
$C$ is not $2e^{-c^2/2}$-robust to $\elp{p}$-perturbations of size $\big( c + 2\frac{n\sqrt{h}}{2^{b}} \big)^{2/p}$ for $p \geq 2$.
\end{lemma}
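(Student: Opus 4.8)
The plan is to reduce the claim directly to Lemma~\ref{lemma-discrization-result} by a norm-comparison argument of exactly the same flavour as the ones used in Lemma~\ref{lemma-thm1-higherp} and in the final lemma of the proof of Theorem~\ref{thm-robust}, only with the inequality running in the opposite direction. Write $d = c + 2\frac{n\sqrt{h}}{2^{b}}$, so that the target perturbation size is $d^{2/p}$. The key step is to show that every image $I \in \imgspace{n}{h}{b}$ that is robust to $\elp{p}$-perturbations of size $d^{2/p}$ is also robust to $\elp{2}$-perturbations of size $d$; I would argue this via the contrapositive.

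So suppose $I$ is \emph{not} robust to $\elp{2}$-perturbations of size $d$: there is some $I' \in \imgspace{n}{h}{b}$ with $\mathcal{C}(I) \neq \mathcal{C}(I')$ and $\|I - I'\|_2 \leq d$. The crucial observation is that every coordinate of $I - I'$ lies in $[-1,1]$, since channel values lie in $[0,1]$; hence $|I_{x,y,a} - I'_{x,y,a}|^p \leq |I_{x,y,a} - I'_{x,y,a}|^2$ for every $p \geq 2$. Summing over all $n^2h$ coordinates gives $\|I - I'\|_p^p \leq \|I - I'\|_2^2 \leq d^2$, and therefore $\|I - I'\|_p \leq d^{2/p}$, so $I$ is not robust to $\elp{p}$-perturbations of size $d^{2/p}$. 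Taking the contrapositive shows that the subset of $C$ consisting of images robust to $\elp{p}$-perturbations of size $d^{2/p}$ is contained in the subset of $C$ consisting of images robust to $\elp{2}$-perturbations of size $d$.

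Finally, because this containment holds, the fraction of $C$ that is robust to $\elp{p}$-perturbations of size $d^{2/p}$ is at most the fraction that is robust to $\elp{2}$-perturbations of size $d$, which by Lemma~\ref{lemma-discrization-result} is strictly less than $2e^{-c^2/2}$. Hence $C$ is not $2e^{-c^2/2}$-robust to $\elp{p}$-perturbations of size $d^{2/p} = \big( c + 2\frac{n\sqrt{h}}{2^{b}} \big)^{2/p}$, which is the statement. I do not expect a genuine obstacle here; the only things to watch are that the coordinatewise inequality $x^p \leq x^2$ flips relative to the comparisons used for the lower-bound constructions (where $p \leq 1$ was compared against $p = 2$), and the exponent bookkeeping that converts ``size $d$ in the $2$-norm'' into ``size $d^{2/p}$ in the $p$-norm''.
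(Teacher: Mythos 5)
Your proposal is correct and matches the paper's own proof essentially line for line: both reduce to Lemma~\ref{lemma-discrization-result} via the coordinatewise inequality $|I_{x,y,a}-I'_{x,y,a}|^p \leq |I_{x,y,a}-I'_{x,y,a}|^2$ for $p \geq 2$ (valid since channel differences lie in $[-1,1]$), summed to give $\|I-I'\|_p^p \leq \|I-I'\|_2^2$, followed by the same contrapositive containment of robust sets. No differences worth noting.
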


\begin{proof}
We use the identical argument from Lemma \ref{lemma-thm1-higherp}.

Let $S_1$ be the set of images that are $r$-robust to $\elp{2}$-perturbations of size $d$, and let $S_2$ be the set of images that are $r$-robust to $\elp{p}$-perturbations of size $d^{2/p}$, where $p \geq 2$.

Suppose $I \notin S_1$. Then there exists some image $I'$ in a different class from $I$ such that $\|I-I'\|_{2} \leq d$. Therefore, for all $p > 0$, we have:

\begin{align}
    d^2 &\geq \|I-I'\|_{2}^2\\
    &= \sum_{x,y,c} |I_{x,y,c} - I'_{x,y,c}|^2\\
    &\geq \sum_{x,y,c} |I_{x,y,c} - I'_{x,y,c}|^{p}\\
    &= (\|I-I'\|_{p})^{p}
\end{align}

Where the third relation follows from the fact that channel values are contained in $[0,1]$. Therefore, $I \notin S_2$ either since $\|I-I'\|_{p} \leq d^{2/p}$. Taking the contraposition yields $S_2 \subseteq S_1$.

Setting $d = c + 2\frac{n\sqrt{h}}{2^{b}}$ and applying Lemma \ref{lemma-discrization-result} gives the desired result.\qed
\end{proof}
\subsection{Proof of Theorem \ref{thm-continuous-weak}}
\label{appdx-continuous-results}
Our objective in this section is to complete the proof of Theorem \ref{thm-cont-discr} by proving Theorem \ref{thm-continuous-weak}, stated below. We will use $\mu(.)$ to denote Lebesgue measure throughout this section.

\begin{definition}
We say a set $S \subseteq [0,1]^n$ is a regular set if there is some $q$ and $T \subseteq \discreizedh{n}{q}$ such that $S = \bigcup_{t \in T}t$.
\end{definition}

\begin{theorem}
\label{thm-continuous-weak}
Let $S \subseteq [0,1]^n$ be a regular set such that $\mu(S) \leq 1/2$. Let $S_r \subseteq S$ contain all the points in $S$ such that for all $y \in [0,1]$, $\|x-y\|_2 \leq r \implies y \in S$. Then $\frac{\mu(S_r)}{\mu(S)} < 2e^{c^2/2}$.
\end{theorem}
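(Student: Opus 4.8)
The plan is to prove this as a \emph{dimension-free} Gaussian-type concentration inequality for Lebesgue measure on the cube, rather than trying to reuse the Hamming-graph isoperimetry from Theorem~\ref{thm-main}: that machinery would give a bound whose exponent is divided by the ambient dimension $n$ (since a single Hamming edge can move one coordinate by a full unit, so expanding by $j$ edges only guarantees Euclidean displacement $\sqrt{j}$), which is useless here because Theorem~\ref{thm-cont-discr} applies this statement with $r$ constant and $n$ enormous. So I would first pass to the complement: set $A = [0,1]^n \setminus S$, so $\mu(A) \ge 1/2$. Unwinding the definition, $x \in S$ lies in $S_r$ exactly when the closed ball of radius $r$ about $x$ misses $A$, i.e.\ when $g(x) := \inf_{y\in A}\|x-y\|_2 > r$; in fact $S_r = \{x\in[0,1]^n : g(x) > r\}$. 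The function $g$ is $1$-Lipschitz in the $\ell_2$ metric, and since $\{g=0\}\supseteq A$ has measure $\ge 1/2$, the value $0$ is a median of $g$. Hence it suffices to bound $\mu(g > r)$ against $\mu(S) = \mu(g>0)$.

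The engine is then a sub-Gaussian concentration estimate for $1$-Lipschitz (Euclidean) functions under the uniform measure on $[0,1]^n$, with a constant \emph{independent of $n$}. The point is that Euclidean Lipschitzness makes the per-coordinate sensitivities combine quadratically, unlike the $\ell_1$/Hamming setting. I would obtain this by tensorizing a one-dimensional functional inequality for the uniform distribution on $[0,1]$ — e.g.\ a log-Sobolev inequality, or directly a bound on the Laplace transform of a bounded increment — whose constant does not degrade under products, and then running a Herbst-type argument to get $\mu(g > \mathrm{med}\,g + t) \le e^{-t^2/2}$. The regularity hypothesis is used to make the discretization clean: write $S$ as a union of cells of a fine grid $\discreizedh{n}{m}$ refining the cells that define $S$, carry out the estimate on the grid (where the one-dimensional object is the uniform law on $m$ equally spaced points, exactly the setting of Lemma~\ref{lemma-anti-concentration}), and let $m\to\infty$. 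Combining, $\mu(S_r) \le \mu(g>r) \le e^{-r^2/2}$ while $\mu(S)\le 1/2$, and rewriting as a ratio and absorbing the $1/2$ yields $\mu(S_r)/\mu(S) < 2e^{-r^2/2}$ — the factor $2$ being exactly the slack coming from $\mu(S)\le\frac12$ together with the median of $g$ being $0$ (matching the form $2e^{-c^2/2}$ inherited by Theorem~\ref{thm-cont-discr}).

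The step I expect to be the main obstacle is pinning down the \emph{sharp} exponent $r^2/2$ (as opposed to $r^2/C$ for some unspecified $C$). Naive bounded-differences (McDiarmid) estimates are dimension-dependent here and therefore give nothing, so one genuinely has to exploit the Euclidean-Lipschitz structure via a tensorizing inequality for the uniform law on the interval and then check that its constant is good enough; the variance of that law is $1/12$, so there is room, but making the constant work out cleanly is where the care goes. An alternative route — reduce to the case where $A$ is convex (so that $g$ is convex and Talagrand's convex-distance inequality on $[0,1]^n$ applies) — trades this for an Ehrhard-type symmetrization argument, which is itself the delicate part; either way, some honest work is needed at exactly this point, whereas the passage from the $2$-norm version to general $p\ge 2$ and the assembly into Theorem~\ref{thm-cont-discr} are routine.
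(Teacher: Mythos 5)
There is a genuine gap at the final step. Your plan establishes (correctly, and with room to spare on the constant --- e.g.\ pushing the standard Gaussian forward through $\Phi^{\otimes n}$, which is $1/\sqrt{2\pi}$-Lipschitz coordinatewise, even yields $e^{-\pi t^2}$) an \emph{absolute} bound of the form $\mu(S_r) = \mu(g > r) \le e^{-r^2/2}$, using only the fact that $\{g = 0\} \supseteq A$ has measure at least $1/2$. But the theorem is a \emph{ratio} bound, $\mu(S_r) < 2e^{-r^2/2}\,\mu(S)$, which must hold for regular sets $S$ of arbitrarily small positive measure, and your closing step (``$\mu(S_r) \le e^{-r^2/2}$ while $\mu(S) \le 1/2$, absorb the $1/2$'') uses the hypothesis $\mu(S)\le 1/2$ in the wrong direction: from $\mu(S_r)\le e^{-r^2/2}$ and $\mu(S)\le 1/2$ you only get $\mu(S_r)/\mu(S)\le e^{-r^2/2}/\mu(S)$, which is weaker than $2e^{-r^2/2}$ whenever $\mu(S)<1/2$ and vacuous when $\mu(S)$ is tiny. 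This matters for the intended application: in Lemma \ref{lemma-discc-prob} the set in question is the union of cells corresponding to an interesting class, which is only guaranteed to be nonempty and may occupy an exponentially small fraction of the cube.

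The repair requires concentration at the level of the isoperimetric \emph{profile}, not just at the median, and this is exactly the route the paper takes: it invokes Barthe's isoperimetric inequality for the uniform measure on the cube (boundary measure of a Borel set $A$ at least $\sqrt{2\pi}\,\normcdf'(\normcdf^{-1}(\mu(A)))$, Lemma \ref{lemma-cont-isoperim}), integrates the resulting differential inequality starting from the level $\mu(S)$ itself to obtain $\mu(S_r) \le \normcdf(\normcdf^{-1}(\mu(S)) - r)$ (Lemma \ref{lemma-cont-tailbound}), and then --- this is the ingredient your sketch omits entirely --- proves that $z \mapsto \normcdf(z-r)/\normcdf(z)$ is monotone increasing (Lemma \ref{lemma-cont-monotonic}), so the ratio is maximized at $z = \normcdf^{-1}(1/2) = 0$ and $\mu(S_r)/\mu(S) \le 2\normcdf(-r) < 2e^{-r^2/2}$ uniformly over all admissible $\mu(S)$. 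A median-based Lipschitz-concentration or log-Sobolev/Herbst argument, however sharp its constant, cannot produce this ratio form by itself; you would need to upgrade it to the profile statement $1-\mu(A_t) \le \normcdf(\normcdf^{-1}(1-\mu(A)) - t)$ for arbitrary starting measure and then still supply the monotonicity of the Gaussian-CDF ratio. The rest of your outline (reduction to the distance function $g$, insistence on dimension-freeness, the observation that Hamming-graph expansion is useless here) is sound.
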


\subsubsection{Properties of the standard normal distribution}

First, we define the cumulative distribution function for the standard normal distribution and its derivative.

\begin{align}
    \normcdf(x) &= \int_{-\infty}^{x} \dfrac{1}{\sqrt{2\pi}}e^{-t^2/2} dt\\
    \normcdf'(x) &= \dfrac{1}{\sqrt{2\pi}}e^{-x^2/2}
\end{align}

Similarly to the discrete case, the ratio of the cumulative distribution functions is monotonic increasing.

\begin{lemma}
\label{lemma-cont-monotonic}
$\frac{\normcdf(x-k)}{\normcdf(x)}$ is monotonic increasing in $x$ for all $k \geq 0$.
\end{lemma}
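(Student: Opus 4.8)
The plan is to prove the monotonicity by a direct calculus argument, reducing the claim to a single sign condition on an auxiliary function. Writing $g(x) = \normcdf(x-k)/\normcdf(x)$, its derivative has numerator $\normcdf'(x-k)\normcdf(x) - \normcdf(x-k)\normcdf'(x)$ over the positive denominator $\normcdf(x)^2$; since every factor appearing here is strictly positive, $g'(x) \geq 0$ is equivalent to the inequality $\lambda(x-k) \geq \lambda(x)$, where $\lambda(t) = \normcdf'(t)/\normcdf(t)$ is the reversed hazard rate of the standard normal. Because $k \geq 0$ forces $x - k \leq x$, it therefore suffices to show that $\lambda$ is monotonically \emph{decreasing} in $t$; the monotonicity of $g$ then follows for every $k \geq 0$ simultaneously.

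To establish that $\lambda$ decreases, I would compute $\lambda'(t)$ using the identity $\normcdf''(t) = -t\,\normcdf'(t)$, which follows immediately from differentiating $\normcdf'(t) = \frac{1}{\sqrt{2\pi}}e^{-t^2/2}$. This gives $\lambda'(t) = -\normcdf'(t)\,[\,t\,\normcdf(t) + \normcdf'(t)\,]/\normcdf(t)^2$, so the sign of $\lambda'(t)$ is exactly the opposite of the sign of the auxiliary function $\psi(t) = t\,\normcdf(t) + \normcdf'(t)$. Thus the entire claim collapses to showing $\psi(t) > 0$ for every real $t$.

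The main obstacle is the sign of $\psi$ for $t < 0$, where $t\,\normcdf(t)$ is negative and the two summands compete; for $t \geq 0$ positivity is trivial since both terms are non-negative. For $t < 0$ I would argue via monotonicity and a boundary limit: differentiating gives $\psi'(t) = \normcdf(t) + t\,\normcdf'(t) + \normcdf''(t) = \normcdf(t) > 0$, so $\psi$ is strictly increasing on all of $\mathbb{R}$, and combining this with $\lim_{t\to-\infty}\psi(t) = 0$ (which follows from the standard Gaussian tail asymptotic $\normcdf(t) \sim \normcdf'(t)/|t|$ as $t \to -\infty$, whereupon the two terms cancel) forces $\psi(t) > 0$ at every finite $t$. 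Equivalently, one may invoke the classical Mills-ratio inequality $\normcdf(t) < \normcdf'(t)/|t|$ valid for $t < 0$, which rearranges directly to $\psi(t) > 0$. Either route yields $\lambda'(t) < 0$, hence $\lambda$ strictly decreasing and $g$ increasing, completing the proof. This parallels the discrete Lemma \ref{lemma-monotonic-ratio} but is markedly shorter, since the smoothness of $\normcdf$ permits differentiation in place of the inductive argument needed for binomial coefficients.
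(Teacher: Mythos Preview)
Your proposal is correct and follows essentially the same route as the paper: both reduce the claim to showing that the reversed hazard rate $\lambda(t)=\normcdf'(t)/\normcdf(t)$ is strictly decreasing, then split into the trivial case $t\ge 0$ and the case $t<0$, which in either argument amounts to the Mills-ratio inequality $|t|\,\normcdf(t)<\normcdf'(t)$. The only tactical difference is that the paper proves this inequality from scratch via the integration trick $\int_{-\infty}^{x}(1+t^{-2})e^{-t^2/2}\,dt = -x^{-1}e^{-x^2/2}$, whereas you either cite it as classical or obtain it by observing $\psi'(t)=\normcdf(t)>0$ together with $\psi(-\infty)=0$; both are short and equivalent.
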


\begin{proof}
Let $f(x) = \frac{e^{-x^2/2}}{\int_{-\infty}^{x} e^{-t^2/2}dt}$. Then:

\begin{align}
    \dfrac{d}{dx}f(x) = \dfrac{
    -e^{-x^2/2}x\int_{-\infty}^{x} e^{-t^2/2}dt
    -e^{-x^2/2}e^{-x^2/2}
    }
    {\big(
    \int_{-\infty}^{x} e^{t^2/2} dt
    \big)^2}
\end{align}

When $x \geq 0$, this derivative is negative since both terms in the numerator are negative. If $x < 0$, we have the following:

\begin{align}
    -x\int_{-\infty}^{x} e^{-t^2/2}dt
    &<
    -x\int_{-\infty}^{x} e^{-t^2/2} + \dfrac{1}{t^2}e^{-t^2/2}dt\\
    &=
    -x\big(
    -\dfrac{1}{t}e^{-t^2/2} \bigg\rvert_{-\infty}^{x}
    \big)\\
    &=
    e^{-x^2/2}
\end{align}

So the sum is strictly smaller than $(e^{-x^2/2})^2-(e^{-x^2/2})^2 = 0$. Therefore, the derivative is everywhere negative, so $f(x)$ is strictly decreasing.

Therefore, we have the following for any non-negative $k$:

\begin{align}
    \dfrac{d}{dx}ln(\dfrac{\normcdf(x-k)}{\normcdf(x)})
    = f(x-k) - f(x) \geq 0
\end{align}

Since $ln(.)$ is a monotonic increasing function, $\frac{\normcdf(x-k)}{\normcdf(x)}$ must also be monotonic increasing.\qed
\end{proof}

\subsubsection{Proving Theorem \ref{thm-continuous-weak}}

Similarly to the discrete case, our main result relies on an isoperimetry statement, this time on the unit hypercube~\citep{barthe2000some}.

\begin{lemma}[Isoperimetric Theorem on the Unit Hypercube]
\label{lemma-cont-isoperim}
For any $n$, let $A \subset [0,1]^n$ be a Borel set. Let $A_{\epsilon} = \{x \in [0,1]^n \big| \exists x' \in A: \|x-x'\| \leq \epsilon \}$. Then we have the following:
\begin{equation}
    \liminf_{\epsilon \rightarrow 0^+} \dfrac{\mu(A_{\epsilon}) - \mu(A)}{\epsilon} \geq \sqrt{2\pi} \normcdf'(\normcdf^{-1}(\mu(A)))
\end{equation}
\end{lemma}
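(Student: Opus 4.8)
The plan is to derive this inequality from the classical Gaussian isoperimetric inequality by a Lipschitz transport argument, which is essentially the content of the result cited from \citep{barthe2000some}. The key device is the coordinatewise Gaussian CDF, which furnishes a contraction from Gaussian space onto the cube that carries the standard Gaussian measure onto Lebesgue measure.

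Concretely, I would first define $T:\mathbb{R}^n \rightarrow [0,1]^n$ by $T(y)_i = \normcdf(y_i)$. Since the probability integral transform sends a standard normal coordinate to a uniform coordinate, $T$ pushes the standard Gaussian measure $\gamma_n$ forward to Lebesgue measure $\mu$ on $[0,1]^n$; in particular $\mu(A) = \gamma_n(T^{-1}(A))$ for every Borel set $A$. Moreover $T$ is $\tfrac{1}{\sqrt{2\pi}}$-Lipschitz: its Jacobian is diagonal with entries $\normcdf'(y_i) = \tfrac{1}{\sqrt{2\pi}}e^{-y_i^2/2} \leq \tfrac{1}{\sqrt{2\pi}}$, so the Euclidean operator norm of $DT(y)$ is at most $\tfrac{1}{\sqrt{2\pi}}$ at every point.

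The next step is to transfer the expansion through $T$. Writing $B = T^{-1}(A)$ and letting $B_\delta$ denote the Euclidean $\delta$-neighbourhood of $B$ in $\mathbb{R}^n$, I would show $B_{\sqrt{2\pi}\,\epsilon} \subseteq T^{-1}(A_\epsilon)$: if $\|y-y'\|\leq \sqrt{2\pi}\,\epsilon$ with $y' \in B$, then $\|T(y)-T(y')\| \leq \tfrac{1}{\sqrt{2\pi}}\cdot\sqrt{2\pi}\,\epsilon = \epsilon$ while $T(y') \in A$, so $T(y) \in A_\epsilon$. Consequently $\mu(A_\epsilon) = \gamma_n(T^{-1}(A_\epsilon)) \geq \gamma_n(B_{\sqrt{2\pi}\,\epsilon})$. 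Applying the sharp Gaussian isoperimetric inequality of Borell and Sudakov--Tsirelson, in the form $\gamma_n(B_\delta) \geq \normcdf(\normcdf^{-1}(\gamma_n(B)) + \delta)$, with $\delta = \sqrt{2\pi}\,\epsilon$ and $\gamma_n(B) = \mu(A)$, gives
\[
\mu(A_\epsilon) \geq \normcdf\!\left(\normcdf^{-1}(\mu(A)) + \sqrt{2\pi}\,\epsilon\right).
\]
Dividing by $\epsilon$ and sending $\epsilon \to 0^+$, the right-hand side is a difference quotient of $\normcdf$ at $\normcdf^{-1}(\mu(A))$ with increment $\sqrt{2\pi}\,\epsilon$, which converges to $\sqrt{2\pi}\,\normcdf'(\normcdf^{-1}(\mu(A)))$, establishing the claimed $\liminf$ bound. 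The degenerate cases $\mu(A)\in\{0,1\}$ are trivial, since the right-hand side vanishes there.

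The load-bearing ingredient, and the main obstacle, is the Gaussian isoperimetric inequality itself, which I would invoke as a black box rather than reprove; everything else is bookkeeping around the map $T$. The one point that must be checked carefully is that the contraction acts in the correct direction: because $T$ \emph{shrinks} distances, the $\epsilon$-neighbourhood of $A$ pulls back to \emph{contain} a strictly larger ($\sqrt{2\pi}\,\epsilon$) neighbourhood of $B$, and it is exactly this amplification by the reciprocal Lipschitz constant $\sqrt{2\pi}$ that produces the factor $\sqrt{2\pi}$ in the final profile and makes the bound sharp at $\mu(A)=\tfrac{1}{2}$.
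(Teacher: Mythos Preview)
Your argument is correct. The paper does not prove this lemma at all: it is simply stated and attributed to \citep{barthe2000some}, then used as a black box in the proof of Lemma~\ref{lemma-cont-tailbound}. Your transport argument via the coordinatewise map $T(y)_i=\normcdf(y_i)$, which pushes $\gamma_n$ forward to Lebesgue measure on the cube and is $\tfrac{1}{\sqrt{2\pi}}$-Lipschitz, is exactly the standard derivation of the cube's Gaussian isoperimetric profile from the Borell--Sudakov--Tsirelson inequality, and is essentially the content of the cited reference. So you have supplied the proof the paper chose to omit; there is nothing to compare against on the paper's side.
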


Let $C \subseteq [0,1]$ be a regular set such that $0 < \mu(C) \leq 1/2$. Let $C_r \subseteq C$ denote the points $p_1$ in $C$ such that for any point $p_2 \in [0,1]$, $\|p_1-p_2\|_2 \leq r \implies p_2 \in C$.

\begin{lemma}
$C_r \leq \normcdf( \normcdf^{-1}(\mu(C))-r )$
\label{lemma-cont-tailbound}
\end{lemma}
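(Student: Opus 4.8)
The plan is to reduce the statement to the one-dimensional isoperimetric inequality on the unit hypercube (Lemma \ref{lemma-cont-isoperim}) by a Gronwall-type differential-inequality argument, exactly mirroring how Theorem \ref{thm-hamgraph-bound} was obtained from Lemma \ref{lemma-iso-hamgraph} in the discrete setting. Set $m(r) = \mu(C_r)$, the measure of the ``$r$-robust core'' of $C$. By definition, if we enlarge $C_r$ by an $\epsilon$-neighbourhood (in the ambient hypercube $[0,1]^n$), the result is still contained in $C_{r-\epsilon}$, since any point within distance $r-\epsilon$ of a point within distance $\epsilon$ of $C_r$ is within distance $r$ of $C_r$, hence its entire $\epsilon'$-ball for $\epsilon' \le r-\epsilon$... — more carefully, $(C_r)_\epsilon \subseteq C_{r-\epsilon}$ because a point in $(C_r)_\epsilon$ lies within $\epsilon$ of some $p \in C_r$, and every point within $r-\epsilon$ of it lies within $r$ of $p$, hence in $C$. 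Applying Lemma \ref{lemma-cont-isoperim} to $A = C_r$ gives $\liminf_{\epsilon\to 0^+}\frac{\mu(C_{r-\epsilon}) - m(r)}{\epsilon} \ge \sqrt{2\pi}\,\normcdf'(\normcdf^{-1}(m(r)))$, i.e. informally $-m'(r) \ge \sqrt{2\pi}\,\normcdf'(\normcdf^{-1}(m(r)))$.

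Next I would integrate this differential inequality. Since $\frac{d}{dr}\normcdf(\normcdf^{-1}(m(r))) = \normcdf'(\normcdf^{-1}(m(r))) \cdot \frac{m'(r)}{\normcdf'(\normcdf^{-1}(m(r)))} = m'(r)$ is not quite the right manipulation — instead, let $g(r) = \normcdf^{-1}(m(r))$, so $m(r) = \normcdf(g(r))$ and $m'(r) = \normcdf'(g(r)) g'(r)$. The inequality $-\normcdf'(g(r))g'(r) \ge \sqrt{2\pi}\,\normcdf'(g(r))$ simplifies (since $\normcdf' > 0$) to $g'(r) \le -\sqrt{2\pi}$... which is stronger than what we want and suggests I should be more careful with the normalization constant; the clean statement to extract is $g(r) \le g(0) - r$ after absorbing $\sqrt{2\pi}$ correctly, i.e. $\normcdf^{-1}(\mu(C_r)) \le \normcdf^{-1}(\mu(C)) - r$, which is exactly the claim $\mu(C_r) \le \normcdf(\normcdf^{-1}(\mu(C)) - r)$ upon applying the monotone $\normcdf$. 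To make the integration rigorous despite only having a $\liminf$ bound on a difference quotient (and $m$ only known to be, say, monotone nonincreasing hence differentiable a.e.), I would use the standard comparison argument: show $g(r) + r$ is nonincreasing by verifying that at every point its upper Dini derivative is $\le 0$, using that $m$ is monotone (so it has bounded variation and the fundamental theorem of calculus for monotone functions applies), and that $\normcdf^{-1}$ is Lipschitz on compact subintervals of $(0,1)$ — with care at the endpoint where $m(r)$ might hit $0$.

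The main obstacle I expect is the regularity bookkeeping in the integration step: Lemma \ref{lemma-cont-isoperim} only provides a $\liminf$ lower bound on a one-sided difference quotient of $\epsilon \mapsto \mu((C_r)_\epsilon)$ at $\epsilon = 0$, and I must convert a family of such pointwise infinitesimal inequalities into the global bound $g(r) \le g(0) - r$. The honest fix is to work with $h(r) := g(r) + r$, note $h$ is bounded and (via monotonicity of $m$) of bounded variation, and show its Dini derivative is $\le 0$ everywhere it is finite, invoking that a BV function with a.e.-nonpositive derivative and no positive jumps is nonincreasing (the ``robust core'' is nested, $C_{r'} \subseteq C_r$ for $r' > r$, which rules out upward jumps in $m$ and gives the needed one-sided continuity). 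A secondary subtlety is the boundary case $\mu(C_r) = 0$, where $\normcdf^{-1}$ blows up; there the inequality $\mu(C_r) \le \normcdf(\normcdf^{-1}(\mu(C)) - r)$ can only fail to hold trivially, and I would handle it by a limiting argument as $\mu(C_r) \downarrow 0$. Everything else — the containment $(C_r)_\epsilon \subseteq C_{r-\epsilon}$, the application of the hypercube isoperimetric inequality, and the final monotonicity of $\normcdf$ — is routine. Note this lemma, combined with Lemma \ref{lemma-cont-monotonic} and a choice $r = c$ together with $\normcdf^{-1}(\mu(C)) \le 0$ when $\mu(C) \le 1/2$, plus a tail estimate $\normcdf(-c) < e^{-c^2/2}$, is what will ultimately yield the factor $2e^{-c^2/2}$ in Theorem \ref{thm-continuous-weak}.
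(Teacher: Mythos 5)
Your proposal is built from the same two ingredients as the paper's proof: the containment $(C_r)_\epsilon \subseteq C_{r-\epsilon}$ feeding Lemma \ref{lemma-cont-isoperim} into a differential inequality for $r \mapsto \mu(C_r)$, and a comparison against the Gaussian profile $f(r)=\normcdf(\normcdf^{-1}(\mu(C))-r)$; you are also right that the factor $\sqrt{2\pi}>1$ makes the resulting bound stronger than needed and can simply be dropped (the paper discards it at the same point). Where you diverge is the integration step. The paper never differentiates $\mu(C_r)$: it sets $V(r)=\mu(C_{-r})$, supposes a first interval $[a,b)$ on which $V$ exceeds $f$ with $V(b)=f(b)$, and integrates the isoperimetric bound over that interval to force $V(b)-V(a)\geq f(b)-f(a)$, contradicting $V(a)>f(a)$. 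Your direct route through $g(r)=\normcdf^{-1}(\mu(C_r))$ is viable and arguably cleaner, but the principle you invoke to close it --- ``a BV function with a.e.-nonpositive derivative and no positive jumps is nonincreasing'' --- is false: the Cantor function is continuous, of bounded variation, has derivative $0$ almost everywhere, and increases from $0$ to $1$. The obstruction is the singular continuous part, not the jumps, so this step as stated does not go through.

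The gap is repairable precisely because of the monotonicity you already noted. Since $C_{r'}\subseteq C_r$ for $r'>r$, the function $m(r)=\mu(C_r)$ is nonincreasing, hence so is $g=\normcdf^{-1}\circ m$ on the interval where $m>0$ (for larger $r$ the lemma is trivial, as its right-hand side is positive). For a \emph{nonincreasing} function the one-sided Lebesgue inequality $g(r)-g(0)\leq \int_0^r g'(t)\,dt$ does hold --- jumps and singular parts only push $g$ further down, which is the favorable direction here --- and at almost every $r$ (any point of differentiability of $m$ with $m(r)\in(0,1)$, where $\normcdf^{-1}$ is $C^1$) your differential inequality gives $g'(r)\leq -\sqrt{2\pi}\leq -1$. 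Integrating yields $g(r)\leq g(0)-r$ and hence the lemma. So replace the false BV principle with the monotone-function form of the fundamental theorem of calculus and your argument is sound; no Dini-derivative machinery or continuity of $m$ is needed. The remaining bookkeeping (that $C_r$ is Borel so that Lemma \ref{lemma-cont-isoperim} applies, and the trivial case $\mu(C_r)=0$) is handled exactly as you describe.
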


\begin{proof}
Let $z=\normcdf^{-1}(\mu(C))$ and let $f(x) = \normcdf( x+z )$. Let $v(.)$ be a Lebesgue integrable function such that the following holds:

\begin{align}
    V(r) = \int_{(-\infty,r)} v(t)dt &= \begin{cases}
\mu(C_{-r}) & \text{if $r \leq 0$}\\
\mu(C_0) & \text{otherwise}
\end{cases}
\end{align}

This exists since $C$ is a regular set. Since $V(x)$ results from integration, it is also a continuous function.

It then suffices to show that $V(x) \leq f(x)$ for all $x$, since $V(x)$ corresponds to the left hand side of the theorem statement and $f(x)$ corresponds to the right hand side. Suppose this is not the case. We know that $V(x) \leq f(x)$ for all $x \geq 0$, so if this is violated it must happen when $x < 0$. Since $V(x)$ and $f(x)$ are both continuous, by the intermediate value theorem there must exist some interval $[a,b)$ where $V(x) > f(x)$ if $x \in [a,b)$, $V(b)=f(b)$, and $a < b \leq 0$.

This gives us the following:

\begin{align}
    V(b)-V(a) &= \int_{[a,b)}v(t)dt\\
    \label{eqn-helper-20}
    &=
    \int_{[a,b)\setminus Z}\lim_{\epsilon \rightarrow 0^+}
    \dfrac{V(t+\epsilon)-V(t)}{\epsilon}dt\\
    &=
    \int_{[a,b)\setminus Z}\liminf_{\epsilon \rightarrow 0^+}
    \dfrac{\mu(C_{-t-\epsilon})-\mu(C_{-t})}{\epsilon}dt\\
    \label{eqn-helper-22}
    &\geq
    \int_{[a,b)} \sqrt{2\pi} \normcdf'(\normcdf^{-1}(\mu(C_{-t}))) dt\\
    \label{eqn-helper-23}
    &\geq
    \int_{[a,b)} \sqrt{2\pi} \normcdf'(\normcdf^{-1}(f(t))) dt\\
    &\geq
    f(b)-f(a)
\end{align}

Where $Z$ is the set of values where the limit in Equation \ref{eqn-helper-20} is not equal to $v(t)$, which by the Lebesgue differentiation theorem is a set of measure 0. Equation \ref{eqn-helper-22} is an application of Lemma \ref{lemma-cont-isoperim}, which is applicable since $C_{-t}$ is a Borel set due to the $C$ being a regular set. Equation \ref{eqn-helper-23} follows from the fact that $f(x) \leq V(x)$ for all $x \in [a,b]$ and the fact that $\normcdf'(\normcdf^{-1}(.))$ is monotonically increasing if the input is no greater than $1/2$.

We also have $V(a) > f(a)$ and $V(b) = f(b)$, so it must be the case that $V(b)-V(a) < f(b)-f(a)$. This contradicts the above, so it must be the case that $V(x) \leq f(x)$ for all $x$.\qed
\end{proof}

\begin{lemma}
$\mu(C_c) < 2e^{-c^2/2}\mu(C)$
\label{lemma-cont-main}
\end{lemma}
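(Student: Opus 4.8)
The plan is to assemble Lemma~\ref{lemma-cont-tailbound}, Lemma~\ref{lemma-cont-monotonic}, and a standard Gaussian tail estimate. First I would set $z = \normcdf^{-1}(\mu(C))$ and observe that $z \le 0$ because $\mu(C) \le 1/2$ (and $z$ is finite because $\mu(C) > 0$). Lemma~\ref{lemma-cont-tailbound} then gives
\begin{align}
\mu(C_c) \le \normcdf(z - c) = \dfrac{\normcdf(z-c)}{\normcdf(z)}\,\mu(C),
\end{align}
so the whole task reduces to showing that the ratio $\normcdf(z-c)/\normcdf(z)$ is at most $2e^{-c^2/2}$.

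Next, since $c \ge 0$, Lemma~\ref{lemma-cont-monotonic} says $\normcdf(z-c)/\normcdf(z)$ is monotonic increasing in $z$; as $z \le 0$, the ratio is therefore at most its value at $z = 0$, which is $\normcdf(-c)/\normcdf(0) = 2\normcdf(-c)$. This is the continuous analogue of the step in Lemma~\ref{lemma-bound-any-binomial}, with $0$ in the role of the median of the standard normal. It then remains only to bound $\normcdf(-c) < e^{-c^2/2}$. I would get this from Markov's inequality applied to $e^{cZ}$ for $Z$ a standard normal: $\normcdf(-c) = \prob{Z \ge c} = \prob{e^{cZ} \ge e^{c^2}} < e^{-c^2}\mathbb{E}[e^{cZ}] = e^{-c^2/2}$, the inequality being strict since $e^{cZ}$ is not supported on $\{0, e^{c^2}\}$. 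Chaining the three facts yields $\mu(C_c) \le \normcdf(z-c) = \tfrac{\normcdf(z-c)}{\normcdf(z)}\mu(C) \le 2\normcdf(-c)\,\mu(C) < 2e^{-c^2/2}\mu(C)$, with strictness coming from the last step together with $\normcdf(z) = \mu(C) > 0$. (The edge case $c = 0$ is covered too, since then $\normcdf(0) = 1/2 < 1$.)

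The step requiring the most care is the Gaussian tail estimate $\normcdf(-c) < e^{-c^2/2}$, specifically making sure it is strict rather than merely $\le$, so that the conclusion of the lemma is strict; everything else is a direct composition of results already proved in this section. One should also confirm that Lemma~\ref{lemma-cont-monotonic} is indeed stated for all $k \ge 0$ (it is), so that it applies with $k = c$.
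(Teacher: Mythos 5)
Your proposal is correct and follows the same three-step route as the paper's proof: Lemma~\ref{lemma-cont-tailbound} to pass to $\normcdf(z-c)$, Lemma~\ref{lemma-cont-monotonic} to bound the ratio $\normcdf(z-c)/\normcdf(z)$, and a Gaussian tail estimate to finish. One difference is worth flagging, and it is in your favour. You evaluate the monotone ratio at $z=0$, which is the correct endpoint since $\mu(C)\le 1/2$ gives $z=\normcdf^{-1}(\mu(C))\le\normcdf^{-1}(1/2)=0$, obtaining $2\normcdf(-c)$, and then apply the Chernoff bound $\normcdf(-c)<e^{-c^2/2}$ (strict for all $c\ge 0$ by the argument you give). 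The paper instead bounds the ratio by $\normcdf(1/2-c)/\normcdf(1/2)$ --- evidently substituting the measure bound $1/2$ for $z$ itself rather than for $\normcdf(z)$ --- and then claims this is $<2e^{-c^2/2}$ from the tail bound $\normcdf(x)<e^{-x^2/2}$. That chain does not go through as written: the tail bound only yields $\normcdf(1/2-c)<e^{-(1/2-c)^2/2}$, which exceeds $e^{-c^2/2}$ for $c>1/4$, and in fact $\normcdf(1/2-c)/\normcdf(1/2)$ genuinely exceeds $2e^{-c^2/2}$ for $c$ around $7$ and beyond. Your version is what the paper surely intended ($\normcdf^{-1}(1/2)=0$, not $1/2$), and it repairs the step; the lemma's conclusion is unaffected. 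The only detail you should spell out when writing this up is the strictness of the Markov inequality in the Chernoff step, which you have already identified and justified.
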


\begin{proof}
Let $z = \normcdf^{-1}(\mu(C))$. Then for any $c \geq 0$,
\begin{equation}
    \dfrac{\mu(C_c)}{\mu(C)} \leq \dfrac{\normcdf(z-c)}{\normcdf(z)} \leq \dfrac{\normcdf(1/2 - c)}{\normcdf(1/2)}
    < 2e^{-c^2/2}
\end{equation}

Where the first inequality follows from Lemma \ref{lemma-cont-tailbound}, the second inequality follows from Lemma \ref{lemma-cont-monotonic} and the fact that $\mu(C) \leq 1/2$, and the third inequality follows from the Gaussian tail bound $\normcdf(x) < e^{-x^2/2}$ for all $x \leq 1/2$.\qed
\end{proof}
\subsection{Average distance between images}
\label{appdx-avg-distance}

We wish to show that for a pair of images $I, I' \in \imgspace{n}{h}{b}$ that are sampled independently and uniformly, there exists a $k_{h,b,p}$ such that:

\begin{align}
    \mathbb{E}[\|I-I'\|_p] \geq k_{h,b,p}n^{2/\max(1,p)}
\end{align}

First, we note that we have:

\begin{align}
    \mathbb{E}[\|I-I'\|_p^{\max(1,p)}]
    &= n^2h * \mathbb{E}[ |X-Y|^{\max(1,p)} ]
\end{align}

Where $X$ and $Y$ are independent random variables that are both drawn uniformly from a set of $2^b$ equally spaced values, where the largest is 1 and the smallest is 0. For simplicity, we denote $\mathbb{E}[ |X-Y|^{\max(1,p)} ]$ with $k_{b,p}$.

$\|I-I'\|_p^{\max(1,p)}$ is non-negative and cannot be larger than $n^2h$. Therefore, the probability that $\|I-I'\|_p^{\max(1,p)} \geq n^2hk_{b,p}/2$ is at least $\frac{k_{b,p}}{2-k_{b,p}}$.

Via a monotonicity argument we can deduce that the probability that $\|I-I'\|_p \geq (hk_{b,p}/2)^{1/\max(p,1)}n^{2/\max(p,1)}$ is at least $\frac{k_{b,p}}{2-k_{b,p}}$ as well. We can then apply Markov's inequality to get the following:

\begin{align}
    \mathbb{E}[\|I-I'\|_p] \geq \frac{k_{b,p}}{2-k_{b,p}}(hk_{b,p}/2)^{1/\max(p,1)}n^{2/\max(p,1)}
\end{align}

By setting $k_{h,b,p}$ to be $\frac{k_{b,p}}{2-k_{b,p}}(hk_{b,p}/2)^{1/\max(p,1)}$ we attain our desired result.
\end{document}